\newcommand{\rref}[2][]{\prettyref{#2}}
\newcommand{\leftfish}{
\begin{tikzpicture}[baseline=-2pt]
\begin{scope}[>=to]
  \draw[double, >->] (0,0) -- (0.4,0);
\end{scope}[>=to]
\end{tikzpicture}}
\newtheorem{theorem}{Theorem}
\newtheorem{definition}[theorem]{Definition}
\newcommand{\fleur}{\ding{95}}
\newcommand{\coqtop}{\text{\href{https://github.com/IBM/FormalML}{\fleur}}}
\newcommand{\coqdef}[2]{\text{\href{\coqHTMLBase/\coqBaseModule#1.html\##2}{\fleur}}}
\newcommand{\libname}{\textsf{CertRL}\xspace}
\newcommand{\R}{\mathbb{R}}
\newcommand{\E}{\mathbb{E}}
\DeclareMathOperator{\argmax}{argmax}
\newcommand{\states}{S}
\newcommand{\actions}{A}
\newcommand{\bind}{\mathsf{bind}}
\newcommand{\ret}{\mathsf{ret}}
\newcommand{\funtoR}[1]{#1 \rightarrow \mathbb{R}}
\newcommand{\giry}{P}
\newcommand{\policy}{\pi}
\newcommand{\transition}{T}
\newcommand{\reward}{r}
\newcommand{\reals}{\mathbb{R}}
\newcommand{\integers}{\mathbb{Z}}
\newcommand{\bellman}{\mathbf{B}}
\newcommand{\mdplen}{n}
\newcommand{\step}{k}
\title[\libname: Formalizing Convergence Proofs for Value and Policy Iteration in Coq]{
\libname: Formalizing Convergence Proofs for Value and Policy Iteration in Coq
}
\author{Koundinya Vajjha}
\email{kov5@pitt.edu}
\affiliation{%
University of Pittsburgh
\country{USA}
}
\author{Avraham Shinnar}
\email{shinnar@us.ibm.com}
\affiliation{%
IBM Research
\country{USA}
}
\author{Barry Trager}
\email{bmt@us.ibm.com}
\affiliation{%
IBM Research
\country{USA}
}
\author{Vasily Pestun}
\email{pestun@ihes.fr}
\affiliation{%
IBM Research
\country{USA}
\&
IHES
}
\author{Nathan Fulton}
\email{nathan@ibm.com}
\affiliation{%
IBM Research
\country{USA}
}
\begin{document}

\begin{abstract}
Reinforcement learning algorithms solve sequential decision-making problems in probabilistic environments by optimizing for long-term reward.  The desire to use reinforcement learning in safety-critical settings inspires a recent line of work on formally constrained reinforcement learning; however, these methods place the implementation of the learning algorithm in their Trusted Computing Base. The crucial correctness property of these implementations is a guarantee that the learning algorithm converges to an optimal policy.

  This paper begins the work of closing this gap by developing a Coq formalization of two canonical reinforcement learning algorithms: value and policy iteration for finite state Markov decision processes. The central results are a formalization of the Bellman optimality principle and its proof, which uses a contraction property of Bellman optimality operator to establish that a sequence converges in the infinite horizon limit. The CertRL development exemplifies how the Giry monad and mechanized metric coinduction streamline optimality proofs for reinforcement learning algorithms. The CertRL library provides a general framework for proving properties about Markov decision processes and reinforcement learning algorithms, paving the way for further work on formalization of reinforcement learning algorithms.
\end{abstract}

\begin{CCSXML}
<ccs2012>
<concept>
<concept_id>10011007.10011074.10011099.10011692</concept_id>
<concept_desc>Software and its engineering~Formal software verification</concept_desc>
<concept_significance>500</concept_significance>
</concept>
<concept>
<concept_id>10010520.10010553.10010554.10010557</concept_id>
<concept_desc>Computer systems organization~Robotic autonomy</concept_desc>
<concept_significance>100</concept_significance>
</concept>
<concept>
<concept_id>10010147.10010257.10010321.10010327.10010328</concept_id>
<concept_desc>Computing methodologies~Value iteration</concept_desc>
<concept_significance>500</concept_significance>
</concept>
<concept>
<concept_id>10010147.10010257.10010321.10010327.10010330</concept_id>
<concept_desc>Computing methodologies~Policy iteration</concept_desc>
<concept_significance>500</concept_significance>
</concept>
</ccs2012>
\end{CCSXML}

\ccsdesc[500]{Software and its engineering~Formal software verification}
\ccsdesc[100]{Computer systems organization~Robotic autonomy}
\ccsdesc[500]{Computing methodologies~Value iteration}
\ccsdesc[500]{Computing methodologies~Policy iteration}

\keywords{Formal Verification, Policy Iteration, Value Iteration, Reinforcement Learning, Coinduction}

\maketitle

\section{Introduction}

Reinforcement learning (RL) algorithms solve sequential decision making problems in which the goal is to choose actions that maximize a quantitative utility function \cite{bellman1954,howard1960dynamic,Puterman1994,sutton.barto:reinforcement}.
Recent high-profile applications of reinforcement learning include beating the world's best players at Go \cite{alphagonature}, competing against top professionals in Dota \cite{openaidota}, improving protein structure prediction \cite{deepmindprotein}, and automatically controlling complex robots \cite{DBLP:journals/corr/GuHLL16}. These successes motivate the use of reinforcement learning in safety-critical and correctness-critical settings.

Reinforcement learning algorithms produce, at a minimum, a \emph{policy} that specifies which action(s) should be taken in a given state.
The primary correctness property for reinforcement learning algorithms is \emph{convergence}: in the limit, a reinforcement learning algorithm should converge to a policy that optimizes for the expected future-discounted value of the reward signal.

This paper contributes \libname, a formal proof of convergence for value iteration and policy iteration two canonical reinforcement learning algorithms \cite{bellman1954,howard1960dynamic,Puterman1994}. 
They are often taught as the first reinforcement learning methods in machine learning courses because the algorithms are relatively simple but their convergence proofs contain the main ingredients of a typical convergence argument for a reinforcement learning algorithm.

There is a cornucopia of presentations of these iterative algorithms and an equally diverse variety of proof techniques for establishing convergence. 
Many presentations state but do not prove the fact that the optimal policy of
an infinite-horizon Markov decision process with $\gamma$-discounted
reward is a \emph{stationary policy}; i.e.,
the optimal decision in a given state does not depend on the time step at which the state is encountered. 
Following this convention, this paper contributes the first formal proof that policy and value iteration converge in the limit to the optimal policy in the space of stationary policies for infinite-horizon Markov decision processes. 
In addition to establishing convergence results for the classical iterative algorithms under classical infinitary and stationarity assumptions, we also formalize an optimality result about $n$-step iterations of value iteration without a stationarity assumption. 
The former formalization matches the standard theoretical treatment, while the latter is closer to real-world implementations. We shall refer to the former case -- where the set of time steps is an infinite set -- as \textit{infinite-horizon} and the latter case as \textit{finite-horizon}.

In all cases, the convergence argument for policy/value iteration proceeds by proving that a contractive mapping converges to a fixed point and that this fixed point is an optimum.
This is typical of convergence proofs for reinforcement learning algorithms.
 \libname is intentionally designed for ongoing reinforcement learning formalization efforts.

Formalizing the convergence proof directly would require complicated and tedious $\epsilon$-hacking as well as long proofs involving large matrices.
\libname obviates these challenges using a combination of the Giry monad \cite{Giry1982ACA,JACOBS2018200} and a proof technique called
\textit{Metric coinduction} \cite{Kozencoind}.

Metric coinduction was
first identified by Kozen and Ruozzi as a way to streamline and simplify proofs of theorems about streams and stochastic processes  \cite{KozenRuozzi}.
Our convergence proofs use a specialized version of metric coinduction called contraction coinduction \cite{feys2018long} to reason about order statements concerning fixed points of contractive maps. 
Identifying a \textit{coinduction hypothesis} allows us to automatically infer that a given (closed) property holds in the limit whenever it holds \textit{ab initio}. 
The coinduction hypothesis guarantees that this property is a limiting invariant.
This is significant because the low level $\epsilon - \delta$ arguments -- typically needed to show that a given property holds of the limit -- are now neatly subsumed by a single proof rule, allowing reasoning at a higher level of abstraction. 

The \textit{finitary Giry monad} is a monad structure on the space of all finitely supported probability mass functions on a set. Function composition in the Kleisli category of this monad recovers the Chapman-Kolmogorov formula \cite{perrone2019notes,JACOBS2018200}. 
Using this fact, our formalization recasts iteration of a stochastic matrix in a Markov decision process as iterated Kleisli composites of the Giry monad, starting at an initial state. Again, this makes the presentation cleaner since we identify and reason about the basic operations of $\bind$ and $\ret$, thus bypassing the need to define matrices and matrix multiplication and substantially simplifying convergence proofs.

This paper shows how these two basic building blocks -- the finitary Giry monad and metric coinduction -- provide a compelling foundation for formalizing reinforcement learning theory.
\libname develops the basic concepts in reinforcement learning theory and demonstrates the usefulness of this library by proving several results about value and policy iteration.
\libname contains a proof of the Bellman optimality principle, an inductive relation on the optimal value and policy over the horizon length of the Markov decision process.

In practice, reinforcement learning algorithms almost always run in finite time by either fixing a run time cutoff (e.g., number training steps) or by stopping iteration after the value/policy changes become smaller than a fixed threshold.
Therefore, our development also formalizes a proof that $n$-step value iteration satisfies a finite time analogue of our convergence results.

To summarize, the \libname library contains:
\begin{enumerate}
  \item a formalization of Markov decision processes and their long-term values in terms of the finitary Giry monad,
  \item a formalization of optimal value functions and the Bellman operator,
  \item a formal proof of convergence for value iteration and a formalization of the policy improvement theorem in the case of stationary policies, and
  \item a formal proof that the optimal value function for finitary sequences satisfies the  finite time analogue of the Bellman equation. 
\end{enumerate}

Throughout the text which follows, hyperlinks to theorems, definitions
and lemmas which have formal equivalents in the Coq \cite{Coq:manual} development are
indicated by a $\coqtop$.\footnote{We recommend MacOS users view this
  document in Adobe, Firefox, or Chrome, as Preview and Safari parse
  the URLs linked to by $\coqtop$'s incorrectly.}

\libname is part of a larger project for verifying machine learning theory with applications to program synthesis. The entire development is available online at the following URL: \url{https://github.com/IBM/FormalML}.

\section{Background}

We provide a brief introduction to value/policy iteration and to the mathematical structures upon which our formalization is built: contractive metric spaces, metric coinduction, the Giry monad and Kleisli composition.

\subsection{Reinforcement Learning}

This section gently introduces the basics of reinforcement learning
with complete information about the stochastic reward and transition
functions.  In this simplified situation the 
focus of the algorithm is on optimal exploitation of reward.
This framework is also known as the stochastic optimal control problem.

We give an informal definition of Markov decision processes, trajectories, long-term values, and dynamic programming algorithms for solving Markov decision processes. Many of these concepts will be stated later in a more formal type-theoretic style; here, we focus on providing an intuitive introduction to the field.

The basic mathematical object in reinforcement learning theory is the Markov decision process.
A Markov decision process is a 4-tuple $(S, A, R, T)$ where $S$ is a set of states, $A$ is a set of actions, $R : S \times A \times S \rightarrow \mathbb{R}$ is a \emph{reward function}, and $T$ is a \emph{transition relation} on states and actions mapping each $(s,a,s') \in S \times A \times S$ to the probability that taking action $a$ in state $s$ results in a transition to $s'$.
Markov decision processes are so-called because they characterize a sequential decision-making process (each action is a decision) in which the transition structure on states and actions depends only 
on the current state.

\begin{example}[CeRtL the turtle \coqdef{converge.mdp_turtle}{CeRtL_mdp}]
Consider a simple grid world environment in which a turtle can move in cardinal directions throughout a 2D grid. The turtle receives {\tt +1} point for collecting stars, {\tt -10} for visiting red squares, and {\tt +2} for arriving at the green square. The turtle chooses which direction to move, but with probability $\frac{1}{4}$ will move in the opposite direction. For example, if the turtle takes action {\tt left} then it will go {\tt left} with probability $\frac{3}{4}$ and {\tt right} with probability $\frac{1}{4}$. The game ends when the turtle arrives at the green square.

\begin{figure}
  \centering
  \includegraphics[width=0.4\columnwidth]{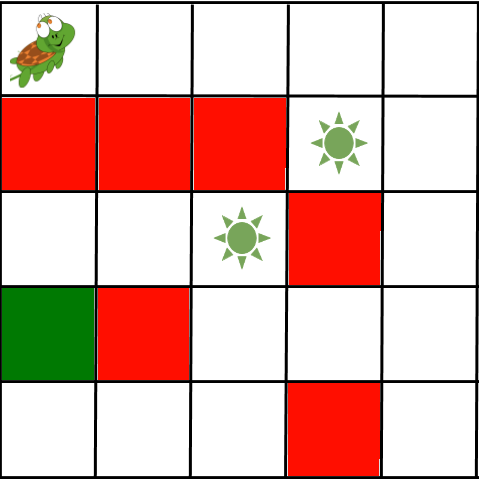} 
  \caption{An example grid-world environment \coqdef{converge.mdp_turtle}{CeRtl_grid_correct}.}
  \label{fig:picture}
\end{figure}

This environment is formulated as a Markov decision process as follows:
\begin{itemize}
  \item The set of states $S$ are the coordinates of each box \coqdef{converge.mdp_turtle}{turtle_state}: 
  $$\{(x,y) ~|~ 1 \le x \le 5 \text{ and } 1 \le y \le 5 \}\ $$
  so that $(1,1)$ is the top-left corner and $(5,5)$ is the bottom-right corner.
  \item The set of actions $A$ are $\{ {\tt up}, {\tt down}, {\tt left}, {\tt right}  \} \ \coqdef{converge.mdp_turtle}{turtle_action}$.
  \item The reward function is defined as \coqdef{converge.mdp_turtle}{turtle_reward}:
  \begin{align*}
    R(1, 4) &= 2 \\
    R(4, 2) &= 1 \\
    R(3, 3) &= 1 \\
    R(\{1,2,3\}, 2) &= -10 \\
    R(4, 3) &= -10 \\
    R(2, 4) &= -10 \\
    R(4, 5) &= -10 \\
    R(\cdot, \cdot) &= 0 \text{ otherwise} \\
  \end{align*}
  \item The transition probabilities are as described \coqdef{converge.mdp_turtle}{turtle_prob_t}; e.g., 
  \begin{align*}
  T((3,4), {\tt up}, (3,3)) &= \frac{3}{4} \\   
  T((3,4), {\tt up}, (3,5)) &= \frac{1}{4} \\
  T((3,4), {\tt up}, (\cdot, \cdot)) &= 0 ~~ \text{otherwise}
  \end{align*}
  and so on.
\end{itemize}

We implement this example in Coq as a proof-of-concept for \libname.
We first define a matrix whose indices are states $(x,y)$ and whose entries are colors $\{$red, green, star, empty$\}$. 
We then define a reward function that maps from matrix entries to a reward depending on the color of the turtle's current state. 
We also define a transition function that comports with the description given above.
At last, we prove that this combination of states, actions, transitions and rewards inhabits our {\texttt {MDP}} (standing for \textit{Markov decision process}) type.
Therefore, all of the theorems developed in this paper apply directly to our Coq implementation of the \libname Turtle environment.
\end{example}

The goal of reinforcement learning is to find a \emph{policy} ($\pi : S \rightarrow A$) specifying which action the algorithm should take in each state.
This policy should maximize the amount of reward obtained by the agent.
A policy is \emph{stationary} if it is not a function of time; i.e., if the optimal action in some state $s \in S$ is always the same and, in particular, independent of the specific time step at which $s$ is encountered.

Reinforcement learning agents optimize for a \emph{discounted sum} of rewards -- placing more emphasis on reward obtained today and less emphasis on reward obtained tomorrow. 
A constant \emph{discount factor} from the open unit interval, typically denoted by $\gamma$, quantitatively discounts future rewards and serves as a crucial hyperparameter to reinforcement learning algorithms.

Value iteration, 
invented by Bellman \cite{bellman1954}, 
is a dynamic programming algorithm that finds optimal policies to reinforcement learning algorithms by iterating a contractive mapping.
Value iteration is defined in terms of a \emph{value function} $V_\pi : S \rightarrow \mathbb{R}$, where $V_\pi(s)$ is the expected value of state $s$ when following policy $\pi$ from $s$.

\begin{algorithm}[h]
\SetAlgoLined
\KwData{\ \\ 
~~~~~Markov decision process $(S,A,T,R)$ \\ 
~~~~~Initial value function $V_0 = 0$  \\ 
~~~~~Threshold $\theta > 0$ \\
~~~~~Discount factor $0 < \gamma < 1$
}  
\KwResult{$V^*$, the value function for an optimal policy.}
\For{$\mdplen$ from $0$ to $\infty$}{
  \For{each $s \in S$}{
    $V_{\mdplen+1}[s] = \max_a \sum_{s'} T(s,a,s')(R(s,a,s') + \gamma V_{\mdplen}[s'])$
  }
  \If{$\forall s |V_{\mdplen+1}[s] - V_\mdplen| < \theta$}{
    \Return{$V_{\mdplen+1}$}
  }
}
\caption{Pseudocode for Value Iteration.}
\end{algorithm} 

The optimal policy $\pi^*$ is then obtained by 
$$\pi^*(a) = \argmax_{a \in A} \sum_{s'} T(s,a,s')(R(s,a,s') + \gamma V_{\mdplen+1}[s']).$$

Policy iteration follows a similar iteration scheme, but with a policy estimation function $Q_\pi : S \times A \rightarrow \mathbb{R}$ where $Q_\pi(s,a)$ estimates the value of taking action $a$ in state $s$ and then following the policy $\pi$.
In  \rref{sec:finite} we will demonstrate a formalized proof
that $V_{\mdplen}$ is the \emph{optimal value} function
of a length $\mdplen$ MDP; this algorithm implements the \emph{dynamic programming} principle.

\subsection{Metric and Contraction Coinduction}
Our formalization uses metric coinduction to establish convergence properties for infinite sequences.
This section recalls the Banach fixed point theorem and explains how this theorem gives rise to a useful proof technique.

A metric space $(X, d)$ is a set $X$ equipped with a function $d : X \times X \rightarrow \mathbb{R}$ satisfying certain axioms that ensure $d$ behaves like a measurement of the \emph{distance} between points in $X$.
A metric space is \emph{complete} if the limit of every Cauchy sequence of elements in $X$ is also in $X$.

Let $(X,d)$ denote a complete metric space with metric $d$. 
Subsets of $X$ are modeled by terms of the function type $\phi : X \rightarrow \mathsf{Prop}$.
Another interpretation is that $\phi$ denotes all those terms of $X$ which satisfy a particular property. 
These \emph{subsets} are also called \emph{Ensembles} in the Coq standard library.

A \emph{Lipschitz map} \coqdef{converge.LM.fixed_point}{is_Lipschitz} is a mapping that is Lipschitz continuous; 
i.e., a mapping $F$ from $(X,d_X)$ into $(Y, d_Y)$ such that for all $x_1, x_2 \in X$ there is some $K \ge 0$ such that
\[
  d_Y(F(x_1), F(x_2)) \le K d_X(x_1, x_2).
\]
The constant $K$ is called a Lipschitz constant.

A map $F : X \rightarrow X$ is called a \textit{contractive map} \coqdef{converge.LM.fixed_point}{is_contraction}, or simply a \textit{contraction}, if there exists a constant $0 \leq \gamma < 1$ such that 
\[ 
d(F(u), F(v)) \leq \gamma d(u,v) \quad \forall u,v \in X.
\]
Contractive maps are Lipschitz maps with Lipschitz constant $\gamma < 1$. 

The Banach fixed point theorem is a standard result of classical analysis which states that contractive maps on complete metric spaces have a unique fixed point.

\begin{theorem}[Banach fixed point theorem]
\label{thm:bfpt}
    If $(X,d)$ is a nonempty complete metric space and $F : X \rightarrow X$ is a contraction, then $F$ has a unique fixed point; 
    \textit{i.e.,} there exists a point $x^* \in X$ such that $F(x^*) = x^*$. This fixed point is $x^* = \lim_{\mdplen \rightarrow \infty} F^{(\mdplen)}(x_0)$ where $F^{(\mdplen)}$ stands for the $\mdplen$-th iterate of the function $F$ and $x_0$ is an arbitrary point in $X$.  
\end{theorem}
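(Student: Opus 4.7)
The plan is to follow the classical Picard iteration argument: pick an arbitrary starting point $x_0 \in X$ (which exists since $X$ is nonempty), define the orbit $x_{\mdplen} = F^{(\mdplen)}(x_0)$, and show that this sequence is Cauchy, converges to a fixed point by completeness, and that the fixed point is unique.

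First I would establish the geometric decay of consecutive distances. A straightforward induction on $\mdplen$ using the contraction inequality gives $d(x_{\mdplen+1}, x_{\mdplen}) \le \gamma^{\mdplen} d(x_1, x_0)$. Then for any $m > \mdplen$, iterated triangle inequality yields
\[
d(x_m, x_{\mdplen}) \le \sum_{\step=\mdplen}^{m-1} d(x_{\step+1}, x_{\step}) \le d(x_1,x_0) \sum_{\step=\mdplen}^{m-1} \gamma^{\step} \le \frac{\gamma^{\mdplen}}{1-\gamma}\, d(x_1, x_0).
\]
Since $0 \le \gamma < 1$, the right-hand side tends to $0$ as $\mdplen \to \infty$, so $(x_{\mdplen})$ is Cauchy. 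By completeness of $(X,d)$, there exists a limit $x^* \in X$.

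Next I would verify $F(x^*) = x^*$. Every contraction is Lipschitz and hence continuous, so $F(x^*) = F(\lim_{\mdplen} x_{\mdplen}) = \lim_{\mdplen} F(x_{\mdplen}) = \lim_{\mdplen} x_{\mdplen+1} = x^*$. For uniqueness, suppose $y^*$ is another fixed point; then $d(x^*, y^*) = d(F(x^*), F(y^*)) \le \gamma\, d(x^*, y^*)$, and since $\gamma < 1$ this forces $d(x^*, y^*) = 0$, whence $x^* = y^*$. Finally, because uniqueness does not depend on the choice of $x_0$, the limit $\lim_{\mdplen \to \infty} F^{(\mdplen)}(x_0)$ must equal $x^*$ for any starting point.

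The main obstacle in a formal development is not the mathematics — each step is elementary — but the bookkeeping around the Cauchy estimate: bounding the telescoping sum by the geometric tail $\gamma^{\mdplen}/(1-\gamma)$ and then feeding this into the $\epsilon$-definition of a Cauchy sequence cleanly. In Coq this tends to require careful manipulation of partial sums of geometric series and of the real-analytic facts about limits of such sums. Proving continuity of $F$ from the contraction bound is a minor lemma, but it must be discharged explicitly before the limit-swap in the fixed-point verification step.
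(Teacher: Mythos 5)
Your Picard-iteration argument is the standard, correct proof of the Banach fixed point theorem, and it matches the classical approach: the paper does not reprove this result but imports the existing Coq formalization of Boldo et al.\ (the Lax--Milgram development), which follows the same geometric-decay/Cauchy-sequence/uniqueness structure you outline. Your closing remark about the formal bookkeeping of the geometric tail estimate is exactly the reason the paper reuses that prior formalization rather than redoing it.
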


The Banach fixed point theorem generalizes to subsets of $X$.

\begin{theorem}[Banach fixed point theorem on subsets \coqdef{converge.LM.fixed_point}{FixedPoint}] 
\label{thm:bfpsubsets}
    Let $(X,d)$ be a complete metric space and $\phi$  a closed nonempty subset of $X$. Let $F : X \rightarrow X$ be a contraction and assume that $F$ preserves $\phi$. In other words, \[ 
     \phi (u) \rightarrow \phi (F(u))    
    \]
    Then $F$ has a unique fixed point in $\phi$; \textit{i.e.,} a point $x^* \in X$ such that $\phi (x^*)$ and $F(x^*) = x^*$. 
    The fixed point of $F$ is given by $x^* = \lim_{\mdplen \rightarrow \infty} F^{(\mdplen)}(x_0)$ where $F^{(\mdplen)}$ stands for the $\mdplen$-th iterate of the function $F$. 
\end{theorem}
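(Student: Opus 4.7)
The plan is to bootstrap from the ordinary Banach fixed point theorem (\rref{thm:bfpt}). Since $(X,d)$ is a complete metric space and $F$ is a contraction on $X$, \rref{thm:bfpt} already supplies a unique fixed point $x^\ast \in X$ and identifies it as $x^\ast = \lim_{\mdplen \to \infty} F^{(\mdplen)}(x_0)$ for any starting point $x_0 \in X$. What remains is to promote ``in $X$'' to ``in $\phi$'': namely to prove $\phi(x^\ast)$, and to observe that uniqueness within $\phi$ is inherited from uniqueness within $X$.

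To prove $\phi(x^\ast)$, the key idea is to choose the starting iterate \emph{inside} $\phi$. Since $\phi$ is nonempty, fix some $y_0$ with $\phi(y_0)$. A straightforward induction on $\mdplen$, using the preservation hypothesis $\phi(u) \to \phi(F(u))$, shows that $\phi(F^{(\mdplen)}(y_0))$ holds for every $\mdplen \ge 0$. Applying \rref{thm:bfpt} with this particular starting point, the sequence $\{F^{(\mdplen)}(y_0)\}_{\mdplen \ge 0}$ converges to $x^\ast$; and since $\phi$ is closed and every term of the sequence lies in $\phi$, the limit $x^\ast$ also satisfies $\phi(x^\ast)$.

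Uniqueness inside $\phi$ requires no extra work: any fixed point of $F$ that lies in $\phi$ is a fortiori a fixed point of $F$ in $X$, and \rref{thm:bfpt} allows only one such point. The limit characterization also transfers unchanged, because the limit produced by \rref{thm:bfpt} is independent of the starting iterate; in particular it may be computed from any $x_0 \in X$ (and not just from a witness of $\phi$).

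The main obstacle, such as it is, is mechanical rather than conceptual: in a Coq formalization, closedness of $\phi$ must be packaged in a sequence-friendly form --- ``if $\phi(x_{\mdplen})$ for all $\mdplen$ and $x_{\mdplen} \to x$, then $\phi(x)$'' --- that plugs directly into the convergence conclusion exported by \rref{thm:bfpt}, and the nonemptiness assumption has to be available as an explicit inhabitant of $\phi$ rather than a purely classical existence statement. With those two ingredients in hand, the entire proof is a few lines of bookkeeping on top of the classical Banach theorem.
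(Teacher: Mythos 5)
Your argument is correct and is the standard derivation: iterate from a witness of $\phi$, propagate membership in $\phi$ by induction using the preservation hypothesis, pass to the limit using closedness, and inherit uniqueness and the limit characterization from \rref{thm:bfpt}. Note that the paper itself does not prove this theorem at all --- it imports it directly from Boldo et al.'s existing Coq formalization (the \texttt{FixedPoint} lemma from the Lax--Milgram development), and only wraps it as the \texttt{metric\_coinduction} proof rule; so there is no in-paper proof to compare against, but your reasoning matches the classical argument that formalization encodes. The only alternative decomposition worth mentioning is to observe that a closed subset of a complete space is itself complete, restrict $F$ to $\phi$ (legitimate by the preservation hypothesis), and apply \rref{thm:bfpt} to $F|_\phi$ on the complete space $(\phi, d)$; this is closer to how the Coq development phrases things (closedness is converted to a completeness-style hypothesis \texttt{my\_complete phi} before invoking \texttt{FixedPoint}), but it proves the same statement with the same ingredients. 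Your closing remarks about needing a sequence-friendly form of closedness and an explicit inhabitant of $\phi$ are exactly the bookkeeping the formalization has to do.
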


Both the Banach fixed point theorem and the more general theorem on subsets were previously formalized in Coq by Boldo et al. \cite{BoldoElfic}. This formalization includes  definitions of Lipschitz maps and contractions. 
We make use of the fact that Boldo et al. prove the above theorem where $X$ is either a {\tt CompleteSpace} or a {\tt CompleteNormedModule}.

The fixed point of $F$ in \rref{thm:bfpsubsets} is unique, but it depends on an initial point $x_0 \in X$, which $F$ then iterates on. Uniqueness of the fixed point implies that different choices of the initial point still give the same fixed point \coqdef{converge.mdp}{fixpt_init_unique}.

To emphasize how this theorem is used in our formalization, we restate it as an inductive proof rule:
\begin{equation}\label{eq:metricc}
    \inferrule{\phi \ \mathrm{ closed} \quad \exists x_0, \phi(x_0) \quad  \phi (u) \rightarrow \phi (F(u))}{\phi (\mathsf{fix} \ F \ x_0)} \ \coqdef{converge.mdp}{metric_coinduction}
\end{equation}

This proof rule states that in order to prove
some closed $\phi$ is a property of a fixed point of $F$,
it suffices to establish the standard inductive assumptions:
that $\phi$ holds for some initial $x_0$, 
and that if $\phi$ holds at $u$ then it also holds after a single application of $F$ to $u$.
In this form, the Banach fixed point theorem is called \emph{Metric coinduction}.
The rule \rref{eq:metricc} is \emph{coinductive} because it is equivalent to the assertion that a certain coalgebra is final in a category of coalgebras. (Details are given in Section 2.3 of Kozen and Ruozzi \cite{KozenRuozzi}).

The following snippet shows how we use the Banach Fixed Point theorem as proven in \cite{BoldoElfic} as a proof rule.

\begin{coq}
Theorem metric_coinduction {phi : X -> Prop} 
 (nephi : phi init) (Hcphi : closed phi) 
 (HFphi : forall x : X, phi x -> phi (F x)):
  phi (fixpt F init). 
Proof. 
  assert (my_complete phi) 
  by (now apply closed_my_complete).
  destruct (FixedPoint K F phi fphi (ex_intro _ _ init_phi)
  H hF) as [? [Hin [? [? Hsub]]]].
  specialize (Hsub init init_phi).
  rewrite <-Hsub in Hin.
  apply Hin.
Qed.
\end{coq}

\begin{definition}[Ordered Metric Space]\label{def:ordered-met-space}
    A metric space $X$ is called an \emph{ordered metric space} if the underlying set $X$ is partially ordered and the sets $\{z \in X | z \le y \}$ and $\{z \in X | y \le z\}$ are closed sets in the metric topology for every $y \in X$.
\end{definition}

For ordered metric spaces, metric coinduction specializes to \cite[Theorem 1]{feys2018long}, which we restate as \rref{thm:contrc} below.

\begin{theorem}[Contraction coinduction]\label{thm:contrc}
    Let $X$ be a non-empty, complete ordered metric space. If $F : X \rightarrow X$ is a contraction and is order-preserving, then:
    \begin{itemize}
        \item $\forall x, F(x) \le x \Rightarrow x^* \le x$ \coqdef{converge.mdp}{contraction_coinduction_Rfct_le} and
        \item $\forall x, x \le F(x) \Rightarrow x \le x^*$ \coqdef{converge.mdp}{contraction_coinduction_Rfct_ge}
    \end{itemize}
    where $x^*$ is the fixed point of $F$.
\end{theorem}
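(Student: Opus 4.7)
The plan is to derive both statements as direct instances of the metric coinduction rule \rref{eq:metricc}, taking as the closed invariant the appropriate order half-space anchored at $x$. The whole proof is essentially one diagram chase per direction, and the ordered-metric-space hypothesis is tailored precisely so that these half-spaces are admissible invariants.

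For the first item, fix $x$ with $F(x) \le x$, and set $\phi(u) := (u \le x)$. To apply metric coinduction I need to check three premises. First, $\phi$ is closed: this is immediate from \rref{def:ordered-met-space}, which says exactly that $\{z \in X \mid z \le y\}$ is closed for every $y$. Second, $\phi$ is inhabited: take $x_0 = x$ itself, since $x \le x$. Third, $\phi$ is preserved by $F$: if $\phi(u)$, i.e.\ $u \le x$, then order-preservation of $F$ yields $F(u) \le F(x)$, and the hypothesis $F(x) \le x$ combined with transitivity gives $F(u) \le x$, that is $\phi(F(u))$. By metric coinduction, $\phi$ holds at the fixed point, giving $x^* \le x$. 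Note that by uniqueness of the fixed point (\coqdef{converge.mdp}{fixpt\_init\_unique}) it does not matter which initial point we feed to $\mathsf{fix}\ F$, so invoking the rule with $x_0 = x$ is legitimate.

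The second item is entirely symmetric: fix $x$ with $x \le F(x)$, set $\psi(u) := (x \le u)$, note that $\psi$ is closed by the other half of \rref{def:ordered-met-space}, that $\psi(x)$ holds trivially, and that if $x \le u$ then $F(x) \le F(u)$ by monotonicity, whence $x \le F(x) \le F(u)$. Metric coinduction then yields $x \le x^*$.

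The only step that requires any real thought is the preservation step, and even there the argument is just ``apply $F$ monotonically to both sides of $u \le x$, then splice in the hypothesis.'' Consequently there is no genuine obstacle; the work was done upstream, in proving the metric coinduction rule \rref{eq:metricc} from the Banach fixed point theorem on subsets (\rref{thm:bfpsubsets}) and in arranging the definition of an ordered metric space so that the relevant order cones are closed. Contraction coinduction is essentially the observation that these two ingredients interact cleanly whenever $F$ is additionally order-preserving.
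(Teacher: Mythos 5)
Your proof is correct and is exactly the argument the paper intends: it presents contraction coinduction as the specialization of the metric coinduction rule \rref{eq:metricc} to the order cones $\{u \mid u \le x\}$ and $\{u \mid x \le u\}$, whose closedness is built into \rref{def:ordered-met-space}, with preservation following from monotonicity of $F$ plus the hypothesis $F(x)\le x$ (resp.\ $x \le F(x)$). The remark about fixed-point uniqueness justifying the choice of initial point matches the paper's own observation as well.
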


We will use the above result to reason about Markov decision processes. However, doing so requires first setting up an ordered metric space on the function space $A \rightarrow \mathbb{R}$ where $A$ is a finite set \coqdef{converge.mdp}{Rfct}.

\subsection{The function space $A \to \mathbb{R}$.}
Let $A$ be a finite set \coqdef{utils.Finite}{Finite}.
We endow the function space $\funtoR{A}$ with a natural vector space structure
and with $L^{\infty}$ norm \coqdef{converge.mdp}{Rmax_norm}:
\begin{equation}\label{eq:rfct_norm}
    \|f\|_{\infty} = \max_{a \in A}|f(a)|    
\end{equation}

Our development establishes several important properties about this function space.
The norm \rref{eq:rfct_norm} is well-defined because $A$ is finite and furthermore induces a metric that makes $\funtoR{A}$ a metric space \coqdef{converge.mdp}{Rfct_UniformSpace}.
With this metric, the space of functions $\funtoR{A}$ is also complete \coqdef{converge.mdp}{Rfct_CompleteSpace}.
From $\R$ this metric inherits a pointwise order; \textit{viz.}, for functions $f, g : \funtoR{A}$,
\begin{align*}
    f \le g \ &\iff \ \forall a \in A, f(a) \le g(a) \ \coqdef{converge.mdp}{Rfct_le}\\    
    f \ge g \ &\iff \ \forall a \in A, f(a) \ge g(a) \ \coqdef{converge.mdp}{Rfct_ge}
\end{align*}
We also prove that the sets 
 $$\{f | f \le g\} \ \coqdef{converge.mdp}{le_closed}$$
\noindent and 
$$\{f | f \ge g\} \ \coqdef{converge.mdp}{ge_closed}$$ 
\noindent are closed in the norm topology. Our formalization of the proof of closedness for these sets relies on classical reasoning. Additionally, we rely on functional extensionality to reason about equality between functions.

We now have an ordered metric space structure on the function space $\funtoR{A}$ when $A$ is finite.
Constructing a contraction on this space will allow an application of \rref{thm:contrc}.
Once we set up a theory of Markov decision processes we will have natural examples of such a function space and contractions on it. Before doing so, we first introduce the Giry monad. 

\subsection{(Finitary) Giry Monad}\label{sec:giry}
A monad structure on the category of all measurable spaces was first described by Lawvere in \cite{lawvere1962category} and was explicitly defined by Giry in \cite{Giry1982ACA}. 
This monad has since been called the Giry monad. While the construction is very general (applying to arbitrary measures on a space), for our purposes it suffices to consider finitely supported probability measures. 

The Giry monad for finitely supported probability measures is called the \emph{finitary Giry monad}, although sometimes also goes by the more descriptive names \emph{distribution monad} and \emph{convex combination monad}.

On a set $A$, let $P(A)$ denote the set of all finitely-supported probability measures on $A$  \coqdef{converge.pmf_monad}{Pmf}.
An element of $P(A)$ is a list of elements of $A$ together with probabilities. The probability assigned to an element $a : A$ is denoted by $p(a)$. 

In our development we state this as the record 

\begin{coq}
  Record Pmf (A : Type) := mkPmf {
  outcomes :> list (nonnegreal * A);
  sum1 : list_fst_sum outcomes = R1
 }.
\end{coq}

where \texttt{outcomes} stores all the entries of the type \texttt{A} along with their atomic probabilities. The field \texttt{sum1} ensures that the probabilities sum to 1. 

The Giry monad is defined in terms of two basic operations associated to this space: 
\begin{align*}
\mathsf{ret}: \ &A \rightarrow P(A) \ \coqdef{converge.pmf_monad}{Pmf_pure} \\   
& a \mapsto \lambda x:A, \ \delta_a(x)
\end{align*}
where $\delta_a(x) = 1$ if $a=x$ and $0$ otherwise. The other basic operation is
\begin{align*}
    \mathsf{bind} : \ &P(A) \rightarrow (A \rightarrow P(B)) \rightarrow P(B) \ \coqdef{converge.pmf_monad}{Pmf_bind} \\
    \mathsf{bind} \ &p \ f = \lambda \ b:B, \ \sum_{a \in A} f(a)(b) * p(a)
\end{align*}
In both cases the resulting output is a probability measure.
The above definition is well-defined because we only consider finitely-supported probability measures.
A more general case is obtained by replacing sums with integrals.

The definitions of $\bind$ and $\ret$ satisfy the following properties: 
\begin{align}
  \bind\ (\ret\ x)\ f & = \ret(f(x)) \ \coqdef{converge.pmf_monad}{Pmf_bind_of_ret} \\
  \bind\ p\ (\lambda x,\, \delta_x) & = p \ \coqdef{converge.pmf_monad}{Pmf_ret_of_bind}\\
  \bind\ (\bind\ p\ f)\ g & = \bind\ p\ (\lambda x, \bind\ (f x)\ g) \ \coqdef{converge.pmf_monad}{Pmf_bind_of_bind}
\end{align}

These \emph{monad laws} establish that the triple $(P,\bind,\ret)$ forms a monad. 

The Giry monad has been extensively studied and used by various authors because it has several attractive qualities that simplify (especially formal) proofs.
First, the Giry monad naturally admits a denotational monadic semantics for certain probabilistic programs \cite{DBLP:conf/popl/RamseyP02,DBLP:conf/lics/JonesP89,DBLP:conf/haskell/ScibiorGG15,audebaud2009proofs}. 
Second, it is useful for rigorously formalizing certain informal arguments in probability theory by providing a means to perform \textit{ad hoc} notation overloading \cite{DBLP:journals/corr/abs-1911-00385}. 
Third, it can simplify certain constructions such as that of the product measure \cite{eberl2015pdf}. 

\libname uses the Giry monad as a substitute for the stochastic matrix associated to a Markov decision process. 
This is possible because the Kleisli composition of the Giry monad recovers the Chapman-Kolmogorov formula \cite{perrone2018categorical,perrone2019notes}. 
The Kleisli composition is the \emph{fish} operator in Haskell parlance.

\subsubsection{Kleisli Composition}
Reasoning about probabilistic \emph{processes} requires composing probabilities.
The Chapman-Kolmogorov formula is a classical result in the theory of Markovian processes that states the probability of transition from one state to another through two steps can be obtained by summing up the probability of visiting each intermediate state.
This application of the Chapman-Kolmogorov formula plays a fundamental role in the study of Markovian processes, but requires formalizing and reasoning about matrix operations.

Kleisli composition provides an alternative and more elegant mechanism for reasoning about compositions of probabilistic choices. This section defines and provides an intuition for Kleisli composition.

Think of $P(A)$ as the random elements of $A$ (\cite[page 15]{perrone2018categorical}).
In this paradigm, the set of maps $A \rightarrow P(B)$ are simply the set of maps with a random outcome. When $P$ is a monad, such maps are called Kleisli arrows of $P$.  

In terms of reinforcement learning, a map $f : A \rightarrow P(B)$ is a rule which takes a state $a:A$ and gives the probability of transitioning to state $b:B$. 
Suppose now that we have another such rule $g : B \rightarrow P(C)$. 
Kleisli composition puts $f$ and $g$ together to give a map $(f \leftfish g) : A \rightarrow P(C)$.
It is defined as:
\begin{align}
    f \leftfish g &:= \lambda x:A, \bind \ (f \ x)\ g \label{eq:firstguy} \\
             &= \lambda x:A, (\lambda c:C, \sum_{b : B} g(b)(c) *f(x)(b)) \\
             &= \lambda (x:A) \ (c:C), \sum_{b : B} f(x)(b)*g(b)(c)\label{eq:chap-kol}
\end{align}

The motivation for \rref{eq:firstguy}--\rref{eq:chap-kol} is intuitive.
In order to start at $x:A$ and end up at $c:C$ by following the rules $f$ and $g$, one must first pass through an intermediate state $b:B$ in the codomain of $f$ and the domain of $g$. 
The probability of that point being any particular $b:B$ is $$f(x)(b)*g(b)(c).$$ 
So, to obtain the total probability of transitioning from $x$ to $c$, simply sum over all intermediate states $b:B$. This is exactly \rref{eq:chap-kol}.
We thus recover the classical Chapman-Kolmogorov formula, but as a Kleisli composition of the Giry monad. 
This obviates the need for reasoning about operators on linear vector spaces, thereby substantially simplifying the formalization effort.

Indeed, if we did not use Kleisli composition, we would have to associate a stochastic transition matrix to our Markov process and manually prove various properties about stochastic matrices which can quickly get tedious. With Kleisli composition however, our proofs become more natural and we reason closer to the metal instead of adapting to a particular representation. 

\section{The \libname Library}
\label{sec:library}

\libname contains a formalization of Markov decision processes, a definition of the Kleisli composition specialized to Markov decision processes, a definition of the long-term value of a Markov decision process, a definition of the Bellman operator, and a formalization of the operator's main properties. 

Building on top of its library of results about Markov decision processes, \libname contains proofs of our main results:
\begin{enumerate}
  \item the (infinite) sequence of value functions obtained by value iteration converges in the limit to a global optimum assuming stationary policies,
  \item the (infinite) sequence of policies obtained by policy iteration converges in the limit to a global optimum assuming stationary policies, and
  \item the optimal value function for  Markov decision process of length $\mdplen$ is computed inductively
    by application of Bellman operator, \rref{sec:finite}.
\end{enumerate}

The following sections describe the above results more carefully. 

\subsection{Markov Decision Processes} 
We refer to \cite{Puterman1994} for detailed presentation of the
theory of Markov decision processes.
Our formalization considers the theory of \emph{infinite-horizon discounted Markov decision processes with deterministic stationary policies}.

We now elaborate on the above definitions and set up relevant notation. Our presentation will be type-theoretic in nature, to reflect the formal development. The exposition (and \libname formalization) closely
follows the work of Frank Feys, Helle Hvid Hansen, and Lawrence Moss \cite{feys2018long}.

\subsubsection{Basic Definitions}
\begin{definition}[Markov Decision Process \coqdef{converge.mdp}{MDP}]\label{def:MDP}
  A Markov decision process consists of the following data:
  \begin{itemize}
    \item A nonempty finite type $\states$ called \emph{the set of states}.\footnote{There are various definitions of finite.  Our mechanization uses surjective finiteness (the existence of a surjection from a bounded set of natural numbers)\coqdef{utils.Finite}{Finite}, and assumes that there is a decidable equality on $\states$.  This pair of assumptions is equivalent to bijectve finitness.}
    \item For each state $s : \states$, a nonempty finite type $\actions(s)$ called \emph{the type of actions} available at state $s$. This is modelled as a dependent type.
    \item A \emph{stochastic transition structure} $\transition: \prod_{s : S} (\actions(s) \to \giry(\states))$. Here $\giry(\states)$ stands for the set of all probability measures on $\states$, as described in Section \ref{sec:giry}.
    \item A \emph{reward function} $\reward: \prod_{s : S} (\actions(s) \to \states \to \reals)$ 
      where $\reward(s, a, s')$ is the reward
    obtained on transition from state $s$ to state $s'$ under action $a$. 
  \end{itemize}
\end{definition}
From these definitions it follows that the rewards are bounded in absolute value: since the state and action spaces are finite, there exists a constant $D$ such that 
\begin{equation}\label{eq:bdd_rewards}
  \forall (s \ s' : \states), (a : A(s)), |\reward (s, a, s')| \le D \ \coqdef{converge.mdp}{bdd}
\end{equation}

\begin{definition}[Decision Rule / Policy]\label{def:dec_rule}
  Given a Markov decision process with state space $\states$ and action space $\prod_{s : \states}\actions(s)$,
  \begin{itemize}
    \item   A function $ \policy: \prod_{s:\states} \actions(s)$ is called a \emph{decision rule}  \coqdef{converge.mdp}{dec_rule}. The decision rule is \emph{deterministic} \footnote{if the decision rule takes a state and returns a probability distribution on actions instead, it is called \emph{stochastic}.}. 
    \item   A \emph{stationary policy} is an infinite \emph{sequence} of decision rules: $(\pi,\pi,\pi,...)$ \coqdef{converge.mdp}{policy}. \emph{Stationary} implies that the same decision rule is applies at each step.
  \end{itemize}
\end{definition}

This policy $\policy$ induces a stochastic dynamic process on $\states$
evolving in discrete time steps $\step \in \integers_{\geq 0}$.
In this section we consider only stationary policies, and therefore
use the terms \emph{policy} and \emph{decision rule} interchangeably.

\subsubsection{Kleisli Composites in a Markov Decision Process}

Note that for a fixed decision rule $\policy$, we get a Kleisli arrow $\transition_{\policy}: \states \to \giry(\states)$ defined as $\transition_{\policy}(s) = \transition(s)(\pi(s))$.

Conventionally, $\transition_{\policy}$ is represented as a row-stochastic matrix
$(T_{\policy})^{s}{}_{s'}$ that acts on the probability co-vectors from the right, so that the row $s$
of $T_{\policy}$ corresponding to state $s$
encodes the probability distribution of states $s'$ after a transition from the state $s$. 

Let $p_{\step} \in P(\states)$ for $\step \in \integers_{\geq 0}$ denote a probability distribution on $S$ evolving
under the policy stochastic map $\transition_{\policy}$ after $\step$ transition steps, so
that $p_0$ is the initial probability distribution on $S$ (the initial distribution is usually taken to be $\ret \ s_0$ for a state $s_0$). These are related by 
\begin{equation}\label{eq:transition}
  p_{\step} = p_0 \transition_{\policy}^{\step}  
\end{equation}
In general (if $p_0 = \ret \ s_0$) the number $p_{\step}(s)$ gives the probability that starting out at $s_0$, one ends up at $s$ after $\step$ stages. So, for example, if $\step=1$, we recover the stochastic transition structure at the end of the first step \coqdef{converge.mdp}{bind_stoch_iter_1}. 

Instead of representing $T_{\policy}^{\step}$ as an iterated product of a stochastic matrix in our formalization, we recognize that \rref{eq:transition} states that $p_{\step}$ is the $\step$-fold iterated Kleisli composite of $T_{\pi}$ applied to the initial distribution $p_0$ \coqdef{converge.mdp}{bind_stoch_iter}.
\begin{equation}
  p_{\step} = (p_0 \leftfish \underbrace{T_{\policy} \leftfish \dots \leftfish T_{\policy}}_{\step \ \text{times}} )
\end{equation}
Thus, we bypass the need to define matrices and matrix multiplication entirely in the formalization. 

\subsubsection{Long-Term Value of a Markov Decision Process}
 Since the transition from one state to another by an action is governed by a probability distribution $T$, there is a notion of expected reward with respect to that distribution.

\begin{definition}[Expected immediate reward]
  For a Markov decision process, 
  \begin{itemize}
  \item An \emph{expected immediate reward} to be obtained in the transition under action $a$
    from state $s$ to state $s'$
     is a function $\bar \reward: \states \to \actions \to \reals$ computed by averaging the
    reward function over the stochastic transition map to a new state $s'$
    \begin{equation}\label{eq:expt_im_rwd}
      \bar \reward(s, a) := \sum_{s' \in \states} \reward(s, a, s') \transition(s,a) (s')
    \end{equation}
    \item An \emph{expected immediate reward under a decision rule} $\policy$, denoted $\bar \reward_\policy: S \to \reals$ is defined to be:  
    \begin{equation} \label{eq:step_expt_reward}
      \bar \reward_{\policy} (s) := \bar \reward(s,\policy(s)) \  \coqdef{converge.mdp}{step_expt_reward} 
    \end{equation}
   That is, we replace the action argument in \rref{eq:expt_im_rwd} by the action prescribed by the decision rule $\policy$.
    \item The \emph{expected reward} at time step $\step$ of a Markov decision process starting at initial state $s$, following policy $\pi$ is defined as the expected value of the reward with respect to the $\step$-th Kleisli iterate of $T_{\policy}$ starting at state $s$.
    \[  r^{\policy}_{\step}(s) := \mathbb{E}_{T_\policy^{\step}(s)}\left[\bar \reward_{\policy} \right] = \sum_{s' \in S}\left[ \bar \reward_{\policy}(s') T_{\policy}^{\step}(s)(s')\right] \ \coqdef{converge.mdp}{expt_reward}\] 
  \end{itemize}
  \end{definition}

The long-term value of a Markov decision process under a policy $\policy$ is defined as follows:

\begin{definition}[Long-Term Value]\label{def:ltv}
  Let $\gamma \in \reals, 0 \le \gamma < 1$ be a \emph{discount factor}, and $\policy = (\policy,\policy,\dots)$ be a stationary policy. Then $V_{\policy}: \states \to \reals$ is given by
  \begin{equation}
    V_{\policy}(s) = \sum_{\step=0}^{\infty} \gamma^{\step} r^{\policy}_{\step}(s) \ \coqdef{converge.mdp}{ltv}
  \label{eq:policyvalue}
  \end{equation}
\end{definition}

The rewards being bounded in absolute value implies that the long-term value function $V_{\policy}$ is well-defined for every initial state \coqdef{converge.mdp}{ex_series_ltv}.

It can be shown by manipulating the series in \rref{eq:policyvalue} that the long-term value satisfies the Bellman equation:
\begin{align}
V_{\policy}(s) &= \bar v(s,\policy(s)) + \gamma \sum_{s' \in S} V_{\policy}(s') T_{\policy}(s)(s') \ \coqdef{converge.mdp}{ltv_corec}  \label{eq:ltv_corec}\\ 
&= \bar \reward_\policy(s) + \gamma \mathbb{E}_{T_\pi(s)} \left[V_{\policy}\right]
\end{align}
\begin{definition}
Given a Markov decision process, we define the \emph{Bellman operator} as
\begin{align}
  \bellman_{\policy}:& (\states \to \R) \to (\states \to \R)   \\
  & W \mapsto \bar \reward_\policy(s) + \gamma  \mathbb{E}_{T_\pi(s)} W
\end{align}
\end{definition}

\begin{theorem}[Properties of the Bellman Operator \coqdef{converge.mdp}{bellman_op_monotone_le}]\label{thm:bellman_op_prop}
The Bellman operator satisfies the following properties:
\begin{itemize}
  \item As is evident from \rref{eq:ltv_corec}, the long-term value $V_{\policy}$ is the fixed point of the operator $\bellman_\policy$ \coqdef{converge.mdp}{ltv_bellman_op_fixpt}.
  \item  The operator $\bellman_\policy$ (called the Bellman operator) is a contraction in the norm \rref{eq:rfct_norm} \coqdef{converge.mdp}{is_contraction_bellman_op}.
  \item The operator $\bellman_\policy$ is a monotone operator. That is, 
  \[\forall s, W_1(s) \le W_2(s) \Rightarrow \forall s, \bellman_\policy(W_1)(s) \le \bellman_\policy(W_2)(s) \]
\end{itemize}
\end{theorem}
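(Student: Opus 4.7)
The plan is to dispatch the three properties separately; each reduces, via a short calculation, to elementary facts about expectations with respect to the probability measure $T_\pi(s)$ (i.e., the Giry-monad bind).

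For the fixed-point claim, there is essentially nothing to do beyond unfolding. Equation~\rref{eq:ltv_corec} already reads $V_{\policy}(s) = \bar \reward_\policy(s) + \gamma\, \mathbb{E}_{T_\pi(s)}[V_{\policy}]$, which is definitionally $\bellman_\policy(V_\policy)(s)$. So I would invoke \coqdef{converge.mdp}{ltv_corec} and rewrite; functional extensionality on $\states \to \R$ finishes the goal.

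For contraction, I would first compute the pointwise difference. Since the $\bar\reward_\policy(s)$ term cancels and expectation under $T_\pi(s)$ is linear in its argument,
\[
  \bellman_\policy(W_1)(s) - \bellman_\policy(W_2)(s) \;=\; \gamma \, \mathbb{E}_{T_\pi(s)}\bigl[W_1 - W_2\bigr].
\]
Taking absolute values and using that $T_\pi(s)$ is a probability measure, $|\mathbb{E}_{T_\pi(s)}[f]| \le \mathbb{E}_{T_\pi(s)}[|f|] \le \|f\|_\infty$ (the latter because a convex combination of numbers bounded by $\|f\|_\infty$ is still bounded by $\|f\|_\infty$). Applied to $f = W_1 - W_2$ and maximized over $s$, this yields $\|\bellman_\policy(W_1) - \bellman_\policy(W_2)\|_\infty \le \gamma \|W_1 - W_2\|_\infty$, and $0 \le \gamma < 1$ is a hypothesis of the MDP setup.

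For monotonicity, $W_1 \le W_2$ pointwise implies $\mathbb{E}_{T_\pi(s)}[W_1] \le \mathbb{E}_{T_\pi(s)}[W_2]$, because summing the same nonnegative weights $T(s,\policy(s))(s')$ against a pointwise inequality preserves it. Adding the common $\bar\reward_\policy(s)$ and multiplying by $\gamma \ge 0$ yields $\bellman_\policy(W_1)(s) \le \bellman_\policy(W_2)(s)$.

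The only real obstacle I anticipate is bureaucratic rather than mathematical: cleanly phrasing ``expectation against a $\bind$'' as a linear and monotone operator so that the cancellation and bounds above go through without manually unfolding the underlying \texttt{list} in the \texttt{Pmf} record. I would factor out two small lemmas, namely linearity $\mathbb{E}_{p}[f + c\cdot g] = \mathbb{E}_p[f] + c\cdot \mathbb{E}_p[g]$ and the bound $|\mathbb{E}_p[f]| \le \|f\|_\infty$ (using that the atomic probabilities sum to $1$ via the \texttt{sum1} field), and use these uniformly across all three items. Once these are available, each property is a few rewrites.
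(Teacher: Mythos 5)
Your proposal is correct and matches the argument the paper relies on: the paper itself gives no written proof beyond noting that the fixed-point property ``is evident from'' the Bellman equation \rref{eq:ltv_corec}, and the contraction and monotonicity claims are discharged in the Coq development by exactly the two facts you factor out --- linearity/monotonicity of expectation against a finitely supported probability measure and the bound $|\mathbb{E}_p[f]| \le \|f\|_\infty$ from the \texttt{sum1} field. Nothing is missing; your plan is essentially the formalization's.
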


The Banach fixed point theorem now says that $V_{\policy}$ is the unique fixed point of this operator.

Let $V_{\policy, \mdplen}: \states \to \reals$ be the $\mdplen$-th iterate of the Bellman operator $B_\policy$. It can be computed by the recursion
relation 
\begin{equation}
  \begin{aligned}
   & V_{\policy, 0}(s_0) = 0 \\
   &  V_{\policy, \mdplen + 1}(s_0) = \bar \reward_{\policy}(s_0) + \gamma \mathbb{E}_{T_\pi(s_0)}V_{\policy, \mdplen} \qquad \mdplen \in \integers_{\ge 0} 
 \end{aligned}
 \label{eq:trecursion}
\end{equation}
where $s_0$ is an arbitrary initial state. The first term in the reward function $V_{\policy, \mdplen+1}$ for the process of length $\mdplen+1$ is the
sum of the reward collected in the first step (\emph{immediate reward}), and the remaining
total reward obtained in the subsequent process of length $\mdplen$ (\emph{discounted future reward}).
The $\mdplen$-th iterate is also seen to be equal to the $\mdplen$-th partial sum of the series \rref{eq:policyvalue} \coqdef{converge.mdp}{ltv_part}.

The sequence of iterates $\{V_{\policy, \mdplen}\}|_{n = 0,1,2, \dots}$ is convergent and equals $V_{\policy}$, by the Banach fixed point theorem. 
\begin{equation}
  V_{\policy} = \lim_{\mdplen \to \infty} V_{\policy, \mdplen} \ \coqdef{converge.mdp}{bellman_op_iterate}
\end{equation} 



\subsection{Convergence of Value Iteration}

In the previous subsection we defined the long-term value function
$V_{\policy}$ and showed that it is the fixed point of the Bellman operator. It is also the pointwise limit of the iterates $V_{\policy,\mdplen}$, which is the expected value of all length $\mdplen$ realizations of the Markov decision process following a fixed stationary policy $\policy$. 

We note that the value function $V_{\policy}$ induces a partial order on the space of all decision rules; with $\sigma \le \tau$ if and only if $V_{\sigma} \le V_{\tau}$ \coqdef{converge.mdp}{policy_le}.

The space of all decision rules is finite because the state and action spaces are finite \coqdef{converge.mdp}{dec_rule_finite}.

The above facts imply the existence of a decision rule (stationary policy) which maximizes the long-term reward. We call this stationary policy the \emph{optimal policy} and its long-term value the \emph{optimal value function}. 
\begin{equation}\label{eq:max_ltv}
    V_*(s) = \max_{\policy} \{V_\policy(s)\} \ \coqdef{converge.mdp}{max_ltv} 
\end{equation}

The aim of reinforcement learning, as we remarked in the introduction, is to have tractable algorithms to find the optimal policy and the optimal value function corresponding to the optimal policy. 

Bellman's \emph{value iteration} algorithm is such an algorithm, which is known to converge asymptotically to the optimal value function. In this section we describe this algorithm and formally prove this convergence property.

\begin{definition}\label{def:bellmanopt}
   Given a Markov decision process we define the \emph{Bellman optimality operator} as:
   \begin{align*}
    \hat \bellman:& (\states \to \reals) \to (\states \to \reals) \\
    &W \mapsto \lambda s, \max_{a \in A(s)}
    \left(\bar \reward(s,a)  + \gamma \mathbb{E}_{\transition(s,a)} [W]\right) \ \coqdef{converge.mdp}{bellman_max_op} 
   \end{align*}
\end{definition}

\begin{theorem}\label{thm:bellman_max_op_prop}
    The Bellman optimality operator $\hat \bellman$ satisfies the following properties:
    \begin{itemize}
      \item  The operator $\hat \bellman$  is a contraction with respect to the $L^{\infty}$ norm (\ref{eq:rfct_norm}) \coqdef{converge.mdp}{is_contraction_bellman_max_op}.
      \item The operator $\hat \bellman$ is a monotone operator. That is, 
      \[\forall s, W_1(s) \le W_2(s) \Rightarrow \forall s, \hat \bellman(W_1)(s) \le \hat \bellman(W_2)(s) \ \coqdef{converge.mdp}{bellman_max_op_monotone_le}\]
    \end{itemize}
\end{theorem}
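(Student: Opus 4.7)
The plan is to prove the two properties independently, starting with monotonicity (which is essentially a one-liner) and then using similar ideas together with a standard sup-absolute-value trick to establish the contraction property.

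For monotonicity, suppose $W_1 \le W_2$ pointwise. For any fixed $s \in \states$ and any action $a \in \actions(s)$, since $\transition(s,a)$ is a probability measure and $\gamma \ge 0$, expectation is monotone, so
\[
\bar \reward(s,a) + \gamma \mathbb{E}_{\transition(s,a)}[W_1] \le \bar \reward(s,a) + \gamma \mathbb{E}_{\transition(s,a)}[W_2].
\]
Taking the $\max$ over $a \in \actions(s)$ on both sides (which preserves $\le$) yields $\hat \bellman(W_1)(s) \le \hat \bellman(W_2)(s)$. In the formalization this reduces to pointwise monotonicity of the finitary expectation (which is a convex combination), monotonicity of scalar multiplication by $\gamma \ge 0$, and a standard lemma that $f \le g$ pointwise on a finite nonempty set implies $\max f \le \max g$.

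For the contraction property, the key auxiliary fact I would first isolate is the lemma
\[
\Bigl|\max_{a \in \actions(s)} f(a) - \max_{a \in \actions(s)} g(a)\Bigr| \le \max_{a \in \actions(s)} |f(a) - g(a)|,
\]
proved by the two-sided argument: for each $a$, $f(a) \le g(a) + |f(a) - g(a)| \le \max_{a'} g(a') + \max_{a'} |f(a') - g(a')|$, take max over $a$ on the left, and symmetrize. Applying this with $f(a) = \bar \reward(s,a) + \gamma \mathbb{E}_{\transition(s,a)}[W_1]$ and $g(a) = \bar \reward(s,a) + \gamma \mathbb{E}_{\transition(s,a)}[W_2]$, the common reward term cancels inside the absolute value, and we obtain
\[
\bigl|\hat \bellman(W_1)(s) - \hat \bellman(W_2)(s)\bigr| \le \gamma \max_{a \in \actions(s)} \bigl|\mathbb{E}_{\transition(s,a)}[W_1 - W_2]\bigr|.
\]
Next, bound $|\mathbb{E}_{\transition(s,a)}[W_1 - W_2]|$ by $\mathbb{E}_{\transition(s,a)}[|W_1 - W_2|]$ (triangle inequality for finite sums), and then by $\|W_1 - W_2\|_\infty$, using that $\transition(s,a)$ is a probability distribution so its mass sums to one. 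Taking the max over $s$ on the left gives $\|\hat \bellman(W_1) - \hat \bellman(W_2)\|_\infty \le \gamma \|W_1 - W_2\|_\infty$, which is precisely the contraction property with constant $\gamma \in [0,1)$.

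The main obstacle I anticipate is not the mathematics but the bookkeeping in Coq: proving the $|\max - \max| \le \max |{-}|$ lemma over a nonempty finite dependent type $\actions(s)$ and threading through the dependent structure of $\actions$ when comparing across different states. The expectation bound itself is routine given the finitary Giry monad setup (the sum of the probability masses equals one is packaged into the \texttt{sum1} field of \texttt{Pmf}), and monotonicity of finite $\max$ and expectation should already be available as utility lemmas in the surrounding \libname development.
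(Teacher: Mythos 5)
Your proof is correct and follows the standard argument that the \libname formalization itself uses: monotonicity of expectation and of finite $\max$ for the first item, and the $|\max f - \max g| \le \max|f-g|$ lemma plus cancellation of the reward term and the $\mathtt{sum1}$ normalization for the contraction bound. The paper states this theorem without a written proof (deferring to the Coq development), so there is nothing further to compare.
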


Now we move on to proving the most important property of $\hat \bellman$:
the optimal value function $V_*$ is a fixed point of $\hat \bellman$.

By Theorem \ref{thm:bellman_max_op_prop} and the Banach fixed point theorem, we know that the fixed point of $\hat \bellman$ exists. Let us denote it $\hat V$. Then we have: 

\begin{theorem}[Lemma 1 of \cite{feys2018long} \coqdef{converge.mdp}{ltv_Rfct_le_fixpt}]\label{thm:ltv_Rfct_le_fixpt}
    For every decision rule $\sigma$, we have $V_\sigma\le \hat V$. 
\end{theorem}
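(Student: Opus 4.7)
The plan is to prove this by a single application of contraction coinduction (\rref{thm:contrc}) to the Bellman optimality operator $\hat\bellman$, using $V_\sigma$ as a witness on the pre-fixed-point side of the proof rule. The key observation is that, while $V_\sigma$ is the fixed point of $\bellman_\sigma$, it is only a \emph{sub-}fixed point of $\hat\bellman$: since $\hat\bellman$ maximizes over all actions and $\bellman_\sigma$ commits to the particular action $\sigma(s)$, one expects $V_\sigma \le \hat\bellman(V_\sigma)$, and contraction coinduction then propagates this inequality to the limit $\hat V$.

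First, I would establish the pointwise domination $\bellman_\sigma(W) \le \hat\bellman(W)$ for every $W : \states \to \R$. Unfolding definitions, for each $s \in \states$,
\[
\bellman_\sigma(W)(s) = \bar\reward(s, \sigma(s)) + \gamma\,\E_{\transition(s,\sigma(s))}[W]
\]
is the expression appearing inside the $\max$ in the definition of $\hat\bellman(W)(s)$ at the particular choice $a = \sigma(s)$, so it is bounded above by the maximum over $a \in \actions(s)$.

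Next, I would specialize this inequality at $W = V_\sigma$ and use that $V_\sigma$ is a fixed point of $\bellman_\sigma$ (\rref{thm:bellman_op_prop}, part 1): $V_\sigma = \bellman_\sigma(V_\sigma) \le \hat\bellman(V_\sigma)$. This is exactly the hypothesis required by the second bullet of \rref{thm:contrc}.

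Finally, I would invoke \rref{thm:contrc} on $\hat\bellman$: the underlying function space $\funtoR{\states}$ is a non-empty complete ordered metric space (as set up earlier in the paper), $\hat\bellman$ is a contraction and is monotone by \rref{thm:bellman_max_op_prop}, and $\hat V$ is by definition its fixed point. The \texttt{ge}-variant of contraction coinduction (the second bullet of \rref{thm:contrc}) then concludes $V_\sigma \le \hat V$. No real obstacle is expected; the only care needed is to make sure the Coq-level closedness and order-preservation hypotheses of \rref{thm:contrc} are discharged using the already-formalized \texttt{ge\_closed} and the monotonicity lemma for $\hat\bellman$.
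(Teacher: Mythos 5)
Your proposal is correct and follows the paper's own argument essentially verbatim: establish $\bellman_\sigma(f) \le \hat\bellman(f)$ pointwise (the paper's \texttt{bellman\_op\_bellman\_max\_le}), specialize at $f = V_\sigma$ using the fixed-point property to get $V_\sigma \le \hat\bellman(V_\sigma)$, and conclude by the \texttt{ge}-variant of contraction coinduction applied to $\hat\bellman$. No differences worth noting.
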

\begin{proof}
    Fix a policy $\sigma$. 
    Note that for every $f : S \rightarrow \reals$, we have $\bellman_{\sigma}(f) \le \hat \bellman (f)$\coqdef{converge.mdp}{bellman_op_bellman_max_le}. In particular, applying this to $f = V_\sigma$ and using Theorem \ref{thm:bellman_op_prop}, we get that $V_\sigma = \bellman_\sigma(V_\sigma)\le \hat \bellman(V_\sigma)$. Now by contraction coinduction (Theorem \ref{thm:contrc} with $F = \hat \bellman$ along with Theorem \ref{thm:bellman_max_op_prop}) we get that $V_\sigma \le \hat V$.
\end{proof}

Theorem \ref{thm:ltv_Rfct_le_fixpt} immediately implies that $V_* \le \hat V$. 

To go the other way, we introduce the following policy, called the \emph{greedy} decision rule. 

\begin{equation}\label{eq:greedy} 
    \sigma_*(s) := \argmax_{a \in A(s)}\left( \bar \reward (a,s) + \gamma \mathbb{E}_{T(s,a)}[\hat V]\right) \ \coqdef{converge.mdp}{greedy}
\end{equation}

We now have the following theorem:

\begin{restatable}[Proposition 1 of \cite{feys2018long} \coqdef{converge.mdp}{exists_fixpt_policy}]{theorem}{exists} \label{thm:exists_fixpt_policy}
   The greedy policy is the policy whose long-term value is the fixed point of  $\hat \bellman$: 
   \[V_{\sigma_*} = \hat V\]
\end{restatable}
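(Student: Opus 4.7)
The plan is to establish the equality $V_{\sigma_*} = \hat{V}$ by showing the two inequalities $V_{\sigma_*} \le \hat V$ and $\hat V \le V_{\sigma_*}$ separately. The first inequality is immediate from \rref{thm:ltv_Rfct_le_fixpt} applied to $\sigma = \sigma_*$. All the real work lies in the second inequality.

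The central observation is that $\sigma_*$ was constructed precisely so that the ordinary Bellman operator $\bellman_{\sigma_*}$ agrees with the Bellman optimality operator $\hat{\bellman}$ at the input $\hat V$. Unfolding the definition in \rref{eq:greedy}, the action $\sigma_*(s)$ realizes the maximum inside $\hat{\bellman}(\hat V)(s)$, so
\begin{equation*}
  \bellman_{\sigma_*}(\hat V)(s) \;=\; \bar \reward(s,\sigma_*(s)) + \gamma\,\mathbb{E}_{T(s,\sigma_*(s))}[\hat V] \;=\; \max_{a \in A(s)}\!\bigl(\bar \reward(s,a) + \gamma\,\mathbb{E}_{T(s,a)}[\hat V]\bigr) \;=\; \hat{\bellman}(\hat V)(s).
\end{equation*}
Since $\hat V$ is a fixed point of $\hat{\bellman}$ by construction, this yields $\bellman_{\sigma_*}(\hat V) = \hat V$, i.e.\ $\hat V$ is also a fixed point of $\bellman_{\sigma_*}$.

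From there I can close the argument in two ways. The cleanest route uses contraction coinduction (\rref{thm:contrc}): $\bellman_{\sigma_*}$ is a contraction and monotone by \rref{thm:bellman_op_prop}, and the relation $\hat V \le \bellman_{\sigma_*}(\hat V)$ holds trivially (with equality). The coinduction rule $\forall x,\; x \le F(x) \Rightarrow x \le x^*$ then gives $\hat V \le V_{\sigma_*}$, since $V_{\sigma_*}$ is the unique fixed point of $\bellman_{\sigma_*}$ (cf.\ \rref{thm:bellman_op_prop} together with Banach's theorem). Alternatively, uniqueness of the Banach fixed point directly forces $\hat V = V_{\sigma_*}$ without even invoking the first inequality, but staging the proof via two inequalities mirrors the Coq development more naturally.

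The main obstacle is the argmax step: formally, one must know that $\argmax_{a \in A(s)}$ is well-defined and that the witness it returns indeed attains the maximum, which rests on the finiteness and nonemptiness of each $A(s)$ from \rref{def:MDP}. A second delicate point is carefully swapping between the pointwise formulation ($\forall s, \dots$) and the function-space formulation needed to feed into \rref{thm:contrc}, which operates on the ordered metric space $\funtoR{S}$; this requires invoking the closedness and monotonicity results already established about $\bellman_{\sigma_*}$ but is otherwise routine.
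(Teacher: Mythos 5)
Your proposal is correct and follows essentially the same route as the paper: establish $\bellman_{\sigma_*}(\hat V) = \hat V$ from the argmax property of the greedy policy, apply contraction coinduction with $F = \bellman_{\sigma_*}$ to get $\hat V \le V_{\sigma_*}$, and combine with \rref{thm:ltv_Rfct_le_fixpt} for the reverse inequality. Your aside that uniqueness of the Banach fixed point would close the argument in one step is also valid, but the staged two-inequality version is exactly what the paper (and its Coq proof) does.
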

\begin{proof}
    We observe that $\bellman_{\sigma_*}(\hat V) = \hat V$ \coqdef{converge.mdp}{exists_fixpt_policy_aux}.    Thus, $\hat V \le \bellman_{\sigma_*}(\hat V) $.
    Note that we have $V_{\sigma_*}$ is the fixed point of $B_{\sigma_*}$ by Theorem \ref{thm:bellman_op_prop}. Now applying contraction coinduction with $F = \bellman_{\sigma_*}$, we get $\hat V \le V_{\sigma_*}$. 
    From Theorem \ref{thm:ltv_Rfct_le_fixpt} we get that $V_{\sigma_*} \le \hat V$.
\end{proof}

Theorem \ref{thm:exists_fixpt_policy} implies that $V_* \ge \hat V$ and so we conclude that $V_* = \hat V$ \coqdef{converge.mdp}{max_ltv_eq_fixpt}.

Thus, the fixed point of the optimal Bellman operator $\hat \bellman$ exists and is equal to the optimal value function. 

 Stated fully, value iteration proceeds by:
\begin{enumerate}
  \item Initialize a value function $V_0 : \states \to \reals$. 
  \item Define $V_{\mdplen+1} = \hat \bellman V_\mdplen$ for $\mdplen\ge 0$. At each stage, the following policy is computed
  \[
    \policy_\mdplen(s) \in \argmax_{a \in A(s)}\left( \bar \reward(s,a) + \gamma \mathbb{E}_{T(s,a)}[V_\mdplen]\right) 
  \]
\end{enumerate}

By the Banach Fixed Point Theorem, the sequence $\{V_\mdplen\}$ converges to the optimal value function $V_*$ \coqdef{converge.mdp}{bellman_iterate}. In practice, one repeats this iteration as many times as needed until a fixed threshold is breached.

In \rref{sec:finite} we explain and provide a formalized proof of the \emph{dynamic programming principle}: the value function $V_{\mdplen}$
is equal to the \emph{optimal value function} of a finite-horizon MDP of length $\mdplen$ with a possibly non-stationary optimal policy.


\subsection{Convergence of Policy Iteration}
The convergence of value iteration is asymptotic, which means the iteration is continued until a fixed threshold is breached. 
Policy iteration is a similar iterative algorithm that benefits from a more definite stopping condition. Define the \emph{$Q$ function} to be:
\[ 
    Q_\policy(s,a) := \bar \reward(s,a) + \gamma \mathbb{E}_{T(s,a)}[V_{\policy}].
\]
The policy iteration algorithm proceeds in the following steps:
\begin{enumerate}
    \item Initialize the policy to $\pi_0$. 
    \item Policy evaluation: For $\mdplen \ge 0$, given $\pi_\mdplen$, compute $V_{\pi_\mdplen}$. 
    \item Policy improvement: From $V_{\pi_\mdplen}$, compute the greedy policy:
    \[
      \pi_{\mdplen+1}(s) \in \argmax_{a \in A(s)}\left[ Q_{\pi_\mdplen}(s,a) \right]  
    \] 
    \item Check if $V_{\pi_\mdplen} = V_{\pi_{\mdplen+1}}$. If yes, stop.
    \item If not, repeat (2) and (3).
\end{enumerate}

This algorithm depends on the following results for correctness. We follow the presentation from \cite{feys2018long}.

\begin{definition}[Improved policy \coqdef{converge.mdp}{improved_tot}] \label{def:improved_policy}    
A policy $\tau$ is called an improvement of a policy $\sigma$ if for all $s \in S$ it holds that 
    \[ 
    \tau(s) = \argmax_{a \in A(s)}\left[ Q_{\sigma}(s,a)] \right]
    \]
\end{definition}
So, step (2) of the policy iteration algorithm simply constructs an improved policy from the previous policy at each stage. 

\begin{theorem}[Policy Improvement Theorem] \label{thm:policy_improvement}
    Let $\sigma$ and $\tau$ be two policies. 
    \begin{itemize}
        \item If $\bellman_\tau V_{\sigma} \ge \bellman_\sigma V_{\sigma}$ then $V_\tau \ge V_\sigma$ \coqdef{converge.mdp}{policy_improvement_1}.
        \item If $\bellman_\tau V_{\sigma} \le \bellman_\sigma V_{\sigma}$ then $V_\tau \le V_\sigma$ \coqdef{converge.mdp}{policy_improvement_2}.
    \end{itemize}
\end{theorem}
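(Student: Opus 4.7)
The plan is to derive both bullets as direct applications of contraction coinduction (Theorem~\ref{thm:contrc}) with $F = \bellman_\tau$. The key observation is that although $V_\sigma$ is the fixed point of $\bellman_\sigma$, it is generally only a pre- or post-fixed point of $\bellman_\tau$, and contraction coinduction converts such one-sided inequalities with $\bellman_\tau$ into an order comparison against its genuine fixed point $V_\tau$.

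First I would invoke Theorem~\ref{thm:bellman_op_prop} to record the three facts we need about $\bellman_\tau$: it is a contraction on $\funtoR{\states}$ equipped with the $L^{\infty}$ norm, it is monotone (hence order-preserving on the ordered metric space defined in \rref{def:ordered-met-space}), and its unique fixed point is $V_\tau$. The same theorem also gives $V_\sigma = \bellman_\sigma V_\sigma$, which I will use to rewrite the hypothesis in each bullet.

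For the first bullet, the hypothesis $\bellman_\tau V_\sigma \ge \bellman_\sigma V_\sigma$ together with $V_\sigma = \bellman_\sigma V_\sigma$ gives $V_\sigma \le \bellman_\tau V_\sigma$. Applying the second clause of Theorem~\ref{thm:contrc} with $x = V_\sigma$ and $F = \bellman_\tau$ then yields $V_\sigma \le V_\tau$. The second bullet is symmetric: the hypothesis reduces to $\bellman_\tau V_\sigma \le V_\sigma$, and the first clause of Theorem~\ref{thm:contrc} with the same choice of $x$ and $F$ yields $V_\tau \le V_\sigma$.

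The main obstacle is not the mathematical content but the bookkeeping in Coq, namely matching $\bellman_\tau$ to the shape required by \texttt{contraction\_coinduction\_Rfct\_le} and \texttt{contraction\_coinduction\_Rfct\_ge}: supplying the contractivity constant $\gamma$, the monotonicity hypothesis, and rewriting via the fixed-point equality \texttt{ltv\_bellman\_op\_fixpt} in the correct direction. Once these lemmas are threaded through, each bullet collapses to a one-line coinductive inference.
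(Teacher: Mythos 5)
Your proof is correct and is exactly the argument the paper intends: the paper's discussion of this theorem describes reducing it, via contraction coinduction applied to the contractive, monotone operator $\bellman_\tau$ with fixed point $V_\tau$, to the single ``base case'' inequality $V_\sigma \le \bellman_\tau V_\sigma$ (resp.\ $\ge$), obtained by rewriting with $V_\sigma = \bellman_\sigma V_\sigma$. No further comment is needed.
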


Using the above theorem, we have: 
\begin{theorem}[Policy Improvement Improves Values \coqdef{converge.mdp}{improved_has_better_value}] 
    If $\sigma$ and $\tau$ are two policies and if $\tau$ is an improvement of $\sigma$, then we have $V_{\tau} \ge V_{\sigma}$. 
\end{theorem}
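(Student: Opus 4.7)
The plan is to reduce the statement directly to the Policy Improvement Theorem (the first bullet of \rref{thm:policy_improvement}) by establishing the hypothesis $\bellman_\tau V_\sigma \ge \bellman_\sigma V_\sigma$ pointwise. Since the statement is an implication about two policies and the value ordering, the only real work is to connect the $\argmax$ definition of ``improvement'' with the inequality between Bellman operators.

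First, I would unfold the definitions so that both sides of the target inequality are expressed in terms of $Q_\sigma$. Concretely, for any state $s$,
\[
\bellman_\sigma V_\sigma(s) = \bar \reward(s, \sigma(s)) + \gamma\, \mathbb{E}_{T(s,\sigma(s))}[V_\sigma] = Q_\sigma(s,\sigma(s)),
\]
and similarly $\bellman_\tau V_\sigma(s) = Q_\sigma(s, \tau(s))$. This step is purely computational rewriting using the definitions of $\bellman_\policy$ and $Q_\policy$ and the expected immediate reward $\bar\reward_\policy(s) = \bar\reward(s,\policy(s))$.

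Next, I would use the hypothesis that $\tau$ is an improvement of $\sigma$, namely $\tau(s) = \argmax_{a \in A(s)} Q_\sigma(s,a)$. This gives $Q_\sigma(s, \tau(s)) \ge Q_\sigma(s, a)$ for every $a \in A(s)$; in particular, choosing $a = \sigma(s)$ yields $Q_\sigma(s, \tau(s)) \ge Q_\sigma(s, \sigma(s))$. Combined with the previous step, this is exactly $\bellman_\tau V_\sigma(s) \ge \bellman_\sigma V_\sigma(s)$, and since $s$ was arbitrary we have the pointwise inequality $\bellman_\tau V_\sigma \ge \bellman_\sigma V_\sigma$ in $\funtoR{S}$.

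Finally, I would invoke \rref{thm:policy_improvement} (first bullet) with this inequality to conclude $V_\tau \ge V_\sigma$, which is what we wanted. The main obstacle is essentially bookkeeping: ensuring that the $\argmax$ in \rref{def:improved_policy} is well-defined (the action set $A(s)$ is a nonempty finite type, so this is automatic) and that the formal Coq development rewrites $\bellman_\policy V_\sigma$ cleanly into the appropriate $Q_\sigma$-expression. Once those definitional unfoldings are in place, the argument is a one-line application of the Policy Improvement Theorem.
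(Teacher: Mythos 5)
Your proposal is correct and follows essentially the same route as the paper: both reduce the claim to the first bullet of the Policy Improvement Theorem by using the $\argmax$ characterization of the improved policy to show $Q_\sigma(s,\tau(s)) \ge Q_\sigma(s,\sigma(s))$, i.e.\ $\bellman_\tau V_\sigma \ge \bellman_\sigma V_\sigma$ pointwise. No gaps.
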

\begin{proof}
    From \rref{thm:policy_improvement}, it is enough to show $\bellman_\tau V_{\sigma} \ge \bellman_\sigma V_{\sigma}$.
    We have that $\tau$ is an improvement of $\sigma$.
    \begin{align}
        \tau(s) &= \argmax_{a \in A(s)}\left[ Q_{\sigma}(s,a) \right] \\
        &= \argmax_{a \in A(s)}\left[ \bar \reward(s,a) + \gamma \mathbb{E}_{T(s,a)}[V_{\sigma}] \right]  \label{eq:tau_prop}
    \end{align}

    Note that 
    \begin{align*}
        \bellman_\tau V_{\sigma} &= \bar \reward(s,\tau(s)) + \gamma  \mathbb{E}_{T(s,\tau(s))} [V_{\sigma}] \\
        &= \max_{a \in A(s)}\left[\bar \reward(s,a) + \gamma \mathbb{E}_{T(s,a)}[V_{\sigma}]\right] \quad \mathrm{by} \ \mathrm{\rref{eq:tau_prop}}\\
        &\ge \bar \reward(s,\sigma(s)) + \gamma \mathbb{E}_{T(s,\sigma(s))}[V_{\sigma}] \\
        &= \bellman_\sigma V_\sigma
    \end{align*}
\end{proof}

In other words, since $\policy_{\mdplen+1}$ is an improvement of $\policy_\mdplen$ by construction, the above theorem implies that  $V_{\policy_\mdplen} \le V_{\policy_{\mdplen+1}}$. This means that $\policy_\mdplen \le \policy_{\mdplen+1}$. 

Thus, the policy constructed in each stage in the policy iteration algorithm is an improvement of the policy in the previous stage. Since the set of policies is finite \coqdef{converge.mdp}{dec_rule_finite}, this policy list must at some point stabilize. 
Thus, the algorithm is guaranteed to terminate.

In \rref{sec:finite} we will provide formalization
of the statement that $\policy_{\mdplen}$ is actually
the \emph{optimal policy} to follow
for an MDP process of any finite length
at that timestep when $\mdplen$ steps remain towards the end of the process.

\subsection{Optimal policy for finite time horizon Markov decision processes\label{sec:finite}}
All results up to this subsection were stated in terms of the convergences of infinite sequences of states and actions.
Stating convergence results in terms of the limits of infinite sequences is not uncommon in texts on reinforcement learning; however, in practice, reinforcement learning algorithms are always run for a finite number of steps.
In this section we consider decision processes of finite length and do not impose an assumption that the optimal policy is stationary.

Let $V_{\bar \policy}$ denote the value function of Markov decision process
for a finite
sequence of policies $\bar \policy = \policy_0::\policy_1::\policy_2:: \dots
\policy_{\mdplen-1}$ of length $\mdplen = \mathsf{len}(\bar \policy)$. 
Denote by $p_0$ the probability distribution over the initial state
at the start of the process. 

We define the probability measure at step $\step$ in terms of Kleisli iterates for \emph{each decision rule} 
$\policy_{i}$ for $i$ in $0 \dots (\step-1)$:
\begin{equation} 
  p_0 T_{\bar \policy[:\step]} := (p_0 \leftfish T_{\policy_0} \dots \leftfish
  T_{\policy_{\step-1}}) \ \coqdef{converge.finite_time}{kliesli_iter_left_vector}
\end{equation}

Below, we will use the pairing notation (bra-ket notation)
\begin{equation}
 \langle p | V \rangle := \mathbb{E}_{p} [V]
\end{equation}
between a probability measure $p$ on a finite set $S$ and a function $V : S \to \reals$,
so that $| V \rangle $ is an element of the vector space
of real valued functions on $S$, $\langle p |$ is a linear form on this vector space associated to a
probablity measure $p$ on $S$, and $\langle p | V \rangle$ denotes evaluation of a linear form $\langle p|$ on
a vector $|V\rangle$.

\begin{definition}[expectation value function of MDP of length $\mdplen = \mathsf{len}(\bar \policy)$ over the initial probability distribution $p_0$]
\label{def:Bellmannstep}
  \begin{equation}
\langle p_0 | V_{\bar \policy} \rangle = \sum_{\step = 0}^{\mdplen-1} \gamma^{\step}   \langle p_0 T_{\bar \policy[:\step]}  | \bar \reward_{\policy_{\step}} \rangle \ \coqdef{converge.finite_time}{ltv_fin}
 \label{eq:finite-time-value}
 \end{equation}
\end{definition}

\rref{def:Bellmannstep} implies the recursion relation
\begin{equation}
  \langle p_0 |  V_{\policy_0::\mathsf{tail}} \rangle =
  \langle p_0 | \bar \reward_{\policy_0} + \gamma \transition_{\policy_0} V_{\mathsf{tail}} \rangle \ \coqdef{converge.finite_time}{ltv_fin_cons} \qquad \mdplen \in \integers_{\ge 0} 
 \label{eq:trecursion1}
\end{equation}
where $\bar \policy = \policy_0 :: \mathsf{tail}$.

Let $\hat V_{*, \mdplen}$ be the optimal value function of the Markov decision process of length
$\mdplen$ on the space of all policy sequences of length $\mdplen$:
\begin{equation}
  \hat V_{*, \mdplen} := \sup_{\bar{\pi} | \mathsf{len}(\pi) = \mdplen} V_{\bar \policy} \ \coqdef{converge.finite_time}{max_ltv_fin}
\end{equation}

Let $\hat V_{\policy_0 :: *, \mdplen + 1}$ be the optimal value function of the Markov decision process
of length $\mdplen + 1$ on the space of all policy sequences of length
$\mdplen + 1$ whose initial term is $\policy_0$.
Using the relation \rref{eq:trecursion1} and that 
\begin{equation}
  \sup_{\policy_0 :: \mathsf{tail}} V_{\policy_0::\mathsf{tail}, \mdplen+1} = \sup_{\policy_0} \sup_{\mathsf{tail}} V_{\policy_0::\mathsf{tail}, \mdplen+1} \ \coqdef{converge.finite_time}{Rmax_list_dec_rule_split}
\label{eq:optimal-value-finite-time}
\end{equation}
we find 
\begin{equation}
  \begin{aligned}
    \langle p_0 | \hat V_{*, \mdplen + 1} \rangle = \sup_{\policy_0 \in \prod_{S} A(s) } \langle p_0 |\bar \reward_{\policy_0} + \gamma T_{\policy_0} \hat V_{*, \mdplen}
    \rangle \ \coqdef{converge.finite_time}{max_ltv_corec} \qquad \mdplen \in \integers_{>=0} \end{aligned}
\label{eq:trecursion2}
\end{equation}
with the initial term of the sequence $V_{*, 0} = 0$. 

The result \rref{eq:trecursion2} can be formulated as follows
\begin{theorem}[Bellman's finite-time optimal policy theorem]
  \label{thm:bft}
  The optimal value function $\hat V_{*, \mdplen + 1}$ of a Markov decision process of length $\mdplen + 1$
  relates to the optimal value function of the same Markov decision process of length $\mdplen$
  by the inductive relation
  \begin{equation}
   \hat V_{*, \mdplen + 1} = \hat \bellman  V_{*, \mdplen}
 \end{equation}
 where $\hat \bellman$ is Bellman optimality operator (\rref{def:bellmanopt}). 
\end{theorem}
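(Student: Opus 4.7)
The plan is to reduce the desired identity to the recursion relation \rref{eq:trecursion2}, which is already established earlier in the subsection, and then unpack the definitions so as to recognize the right-hand side as an application of $\hat \bellman$ to $\hat V_{*, \mdplen}$ pointwise.

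First, I would specialize \rref{eq:trecursion2} to the Dirac distribution $p_0 = \ret(s)$ at an arbitrary state $s \in \states$. Under this choice the pairing $\langle \ret(s) \mid W \rangle$ collapses to $W(s)$, so the identity becomes
\begin{equation*}
\hat V_{*, \mdplen+1}(s) = \sup_{\policy_0 \in \prod_{s'} \actions(s')} \left[ \bar \reward(s, \policy_0(s)) + \gamma\, \mathbb{E}_{\transition(s, \policy_0(s))}[\hat V_{*, \mdplen}] \right].
\end{equation*}
Next, I would observe that the expression inside the supremum depends on the decision rule $\policy_0$ \emph{only} through its action $\policy_0(s) \in \actions(s)$ at the single state $s$. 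Because every other action set $\actions(s')$ is inhabited, a decision rule is free to choose $\policy_0(s) = a$ for any $a \in \actions(s)$, so the supremum over decision rules collapses to a maximum over the finite set $\actions(s)$. The right-hand side thus becomes $\max_{a \in \actions(s)} \bigl[\bar \reward(s, a) + \gamma\, \mathbb{E}_{\transition(s, a)}[\hat V_{*, \mdplen}]\bigr]$, which is exactly $\hat \bellman\, \hat V_{*, \mdplen}(s)$ by \rref{def:bellmanopt}. Since $s$ was arbitrary, functional extensionality finishes the argument.

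The main obstacle is not in this reduction step but sits upstream, already packaged inside \rref{eq:trecursion2}: swapping $\sup_{\mathsf{tail}}$ with the expectation under $\transition_{\policy_0}(s)$ requires exhibiting a \emph{single} tail of length $\mdplen$ that maximizes $V_{\mathsf{tail}}(s')$ simultaneously for every $s'$ in the support of $\transition(s, \policy_0(s))$. In the finite-state finite-action setting this joint optimizer is produced by induction on $\mdplen$: the length-$0$ case is vacuous, and at length $\mdplen+1$ one picks the action-wise maximizer of $\bar \reward(s', a) + \gamma\,\mathbb{E}_{\transition(s',a)}[\hat V_{*, \mdplen}]$ as the head and prepends it to the inductively-obtained jointly-optimal tail. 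A secondary technical point is the collapse of $\sup_{\policy_0}$ to $\max_{a \in \actions(s)}$, which in the mechanization requires a small dependent-type manipulation: one modifies a fixed background decision rule at the single coordinate $s$, and the modified rule is well-typed precisely because $\actions$ is dependent on $\states$ and $\actions(s)$ is the relevant fiber.
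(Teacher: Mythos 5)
Your proposal is correct and follows essentially the same route as the paper, which obtains the theorem by combining the recursion relation \rref{eq:trecursion1} with the supremum-splitting identity \rref{eq:optimal-value-finite-time} to reach \rref{eq:trecursion2} and then reading that off as the Bellman optimality operator. You additionally make explicit two steps the paper leaves implicit --- the interchange of $\sup_{\mathsf{tail}}$ with the expectation via a jointly optimal tail constructed by backward induction, and the collapse of the supremum over decision rules to a maximum over $\actions(s)$ --- and you identify them correctly as the genuine technical content.
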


The iterative computation of the sequence of
optimal value functions $\{\hat V_{*, \mdplen}\}_{\mdplen \in \integers_{\geq 0}}$ of Markov decision processes  of length $\mdplen = 0, 1, 2, \dots$
from the recursion $\hat V_{*, \mdplen+1} = \hat \bellman \hat V_{*, \mdplen}$ is
the same algorithm as \emph{value  iteration}.

\subsection{Comments on Formalization}
\libname contributes a formal library for reasoning about Markov decision processes. We demonstrate the effectiveness of this library's building blocks by proving the two most canonical results from  reinforcement learning theory.
In this section we reflect on the structure of \libname's formalization, substantiating our claim that \libname serves as a convenient foundations for a continuing line of work on formalization of reinforcement learning theory.

\subsubsection{Characterizing Optimality}

Most texts on Markov decision processses (for example \cite[Section 2.1.6]{Puterman1994}) start out with a probability space on the space of all possible realizations of the Markov decision process. The long-term value for an infinite-horizon Markov decision process is then defined as the expected value over all possible realizations:
\begin{equation}
  V_\policy(s) = \mathbb{E}_{(x_1,x_2,\dots)}\left[ \sum_{\step=0}^{\infty}\gamma^\step v(x_\step,\policy(x_\step))| x_0 = s ; \policy\right]
\end{equation}
where each $x_\step$ is drawn from the distribution $T(x_{\step-1},\policy(x_{\step-1}))$. 
This definition is hard to work with because, as \cite{Puterman1994} notes, it ignores the dynamics of the problem. 
Fortunately, it is also unnecessary since statements about the totality of all realizations are rarely made. 

In our setup, following \cite{feys2018long}, we only consider the probability space over the finite set of states of the Markov decision process. 
By identifying the basic operation of Kleisli composition, we generate more realizations (and their expected rewards) on the fly as and when needed.

Implementations of reinforcement learning algorithms often compute the long-term value using matrix operators for efficiency reasons.
The observation that clean theoretical tools do not necessarily entail efficient implementations is not a new observation; both Puterman \cite{Puterman1994} and H\"{o}lzl \cite{hoelzl2017mdp} make similar remarks.
Fortunately, the design of our library provides a clean interface for future work on formalizing efficiency improvements.
Extending \libname with correctness theorems for algorithms that use matrix operations requires nothing more than a proof that the relevant matrix operations satisfy the definition of Kleisli composition.

\subsubsection{Comparison of English and Coq Proofs}

Comparing \rref{thm:ltv_Rfct_le_fixpt} and \rref{thm:exists_fixpt_policy} with the the equivalent results from Puterman \cite[Theorem 6.2.2]{Puterman1994} demonstrates that \libname avoids reasoning about low-level $\epsilon - \delta$ details through strategic use of coinduction. 

\lstnewenvironment{coqn}{\lstset{language=Coq, numbers=left, numberstyle=\tiny, numbersep=5pt}}{}
\begin{table*}[h!]  \centering
  \begin{subtable}[b]{.5\textwidth}
    \begin{center}
    \exists*
    \begin{proof} \ \\
    \begin{enumerate}
      \item $V_{\sigma_*} \le \hat V$ follows by \rref{thm:ltv_Rfct_le_fixpt}. 
      \item Now we have to show $\hat V \le V_{\sigma_*}$. Note that we have $V_{\sigma_*}$ is the fixed point of $B_{\sigma_*}$ by \rref{thm:bellman_op_prop}.
      \item We can now apply contraction coinduction with $F = \bellman_{\sigma_*}$.
      \item The hypotheses are satisfied since by \rref{thm:bellman_op_prop}, the $\bellman_{\sigma_*}$ is a contraction and it is a monotone operator.
      \item The only hypothesis left to show is $\hat V \le \bellman_{\sigma_*} \hat V$.
      \item But in fact, we have $\bellman_{\sigma_*}(\hat V) = \hat V$ by the definition of $\sigma_*$.
    \end{enumerate}
   \end{proof}
  \end{center}
  \caption{English proof adapted from \cite{feys2018long}.}
\end{subtable}
\hspace{.08\textwidth}
  \begin{subtable}[b]{.35\textwidth}
    \begin{center} 
      \begin{coqn}
Lemma exists_fixpt_policy  : forall init,
  let V'  := fixpt (bellman_max_op) in
  let pi' := greedy init in
  ltv gamma pi' = V' init.
Proof.
intros init V' pi';
eapply Rfct_le_antisym; split.
  - eapply ltv_Rfct_le_fixpt.
  - rewrite (ltv_bellman_op_fixpt _ init).
    apply contraction_coinduction_Rfct_ge'.
    + apply is_contraction_bellman_op.
    + apply bellman_op_monotone_ge.
    + unfold V', pi'.
       now rewrite greedy_argmax_is_max. 
Qed.
    \end{coqn}
  \end{center}
  \caption{Coq proof~\coqdef{converge.mdp}{exists_fixpt_policy}}
\end{subtable}
\caption{Comparison of English and Coq proofs of \rref{thm:exists_fixpt_policy}.}
\label{tab:comparison}
\end{table*}

The usefulness of contraction coinduction is reflected in the formalization, sometimes resulting in Coq proofs whose length is almost the same as the English text.

We compare in \rref{tab:comparison} the Coq proof of \rref{thm:exists_fixpt_policy} to an English proof of the same. The two proofs are roughly equivalent in length and, crucially, also make essentially the same argument at the same level of abstraction.
Note that what we compare is not \emph{exactly} the proof from Feys et al. \cite[Proposition 1]{feys2018long}, but is as close as possible to a restatement of their Proposition 1 and Lemma 1 with the proof of Lemma 1 inlined and the construction restated in terms of our development. The full proof from \cite{feys2018long}, with Lemma 1 inlined, reads as follows:
\begin{quote}
  \textbf{Proposition 1}: The greedy policy is optimal. That is, $LTV_{\sigma^*} = V^*$.
\begin{enumerate}
  \item   Observe that $\Psi_{\sigma^*} \ge V^*$ (in fact, equality holds).
  \item   By contraction coinduction, $V^* \le LTV_{\sigma^*}$.
  \item   \textbf{Lemma 1:} For all policies $\sigma$, $LTV_\sigma \le V^*$.
  \item   A straightforward calculation and monotonicity argument shows that for all $f \in B(S, \mathbb{R})$,
   $\Psi_\sigma(f) \le \Psi^*(f)$.
  \item   In particular, $LTV_\sigma = \Psi_\sigma(LTV_\sigma) \le \Psi^*(LTV_\sigma)$.
  \item   By contraction coinduction we conclude that $LTV_\sigma \le V^*$.
\end{enumerate}
\end{quote}

\rref{tab:comparison} compares two coinductive proofs -- one in English and the other in Coq. 
Another important comparison is between a Coq coinductive proof and an English non-coinductive proof.
The Coq proof of the policy improvement theorem provides one such point of comparison. Recall that theorem states that a particular closed property (the set $\{x | x \le y\}$) holds of the fixed point of a particular contractive map (the Bellman operator).  
The most common argument -- presented in the most common textbook on reinforcement learning -- proves this theorem by expanding the infinite sum in multiple steps \cite[Section 4.2]{sutton.barto:reinforcement}. We reproduce this below:
\begin{theorem}[Policy Improvement Theorem \cite{sutton.barto:reinforcement}]
  Let $\pi,\pi'$ be a pair of deterministic policies such that, for all states $s$, 
  \begin{equation}\label{eq:policyimp}
  Q_\pi(s,\pi'(s)) \ge V_{\pi}(s)
  \end{equation} 
  then $V_{\pi'}(s) \ge V_{\pi}(s)$. 
\end{theorem}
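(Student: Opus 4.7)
The plan is to reduce the statement to \rref{thm:policy_improvement} already established in \libname, by reformulating the hypothesis in the language of the Bellman operator. Reading $Q_\pi$ through that lens, $Q_\pi(s,\pi'(s))$ is by its definition $\bar\reward(s,\pi'(s)) + \gamma\,\mathbb{E}_{T(s,\pi'(s))}[V_\pi]$, which is precisely $\bellman_{\pi'}(V_\pi)(s)$. On the other side of the inequality, $V_\pi$ is the fixed point of $\bellman_\pi$ by \rref{thm:bellman_op_prop}, so $V_\pi(s) = \bellman_\pi(V_\pi)(s)$. Hypothesis \rref{eq:policyimp} therefore becomes $\bellman_{\pi'}(V_\pi) \ge \bellman_\pi(V_\pi)$, which is exactly the precondition of the first bullet of \rref{thm:policy_improvement}. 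Applying that bullet immediately yields $V_{\pi'} \ge V_\pi$, completing the proof.

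For completeness, I would also sketch a direct contraction-coinduction proof that avoids invoking \rref{thm:policy_improvement} as a black box and reuses only \rref{thm:bellman_op_prop} together with the metric coinduction rule \rref{eq:metricc}. Using the identification above, the assumption provides $V_\pi \le \bellman_{\pi'}(V_\pi)$. I would then consider the ensemble $\phi(W) := V_\pi \le W$, whose closedness is the \texttt{ge\_closed} result recorded in the section on the function space $A \to \R$. The ensemble is inhabited since $\phi(V_\pi)$ is trivial. It is preserved by $\bellman_{\pi'}$: if $V_\pi \le W$, then monotonicity of $\bellman_{\pi'}$ gives $\bellman_{\pi'}(V_\pi) \le \bellman_{\pi'}(W)$, and chaining with the reformulated hypothesis produces $V_\pi \le \bellman_{\pi'}(V_\pi) \le \bellman_{\pi'}(W)$. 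Since $V_{\pi'}$ is the fixed point of the contraction $\bellman_{\pi'}$, metric coinduction concludes $\phi(V_{\pi'})$, that is, $V_\pi \le V_{\pi'}$.

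The main obstacle is essentially notational rather than mathematical: one must unfold $Q_\pi$ to expose the Bellman operator and then use the fixed-point characterization of $V_\pi$ to convert a comparison against $V_\pi$ into a comparison between $\bellman_{\pi'}(V_\pi)$ and $\bellman_\pi(V_\pi)$. Once that reformulation is in place the inequality falls out in a single invocation of \rref{thm:policy_improvement}, or in a handful of lines through a bespoke contraction-coinduction argument that is essentially the order-dual of the step used in the proof of \rref{thm:exists_fixpt_policy}. In particular, the multi-step telescoping expansion of the infinite sum in the \cite{sutton.barto:reinforcement} presentation is entirely bypassed, illustrating once again the central methodological point of \libname.
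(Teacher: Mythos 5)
Your argument is correct, but it is deliberately \emph{not} the proof the paper gives for this particular statement. The paper reproduces the classical Sutton--Barto argument here on purpose: it unrolls the hypothesis \rref{eq:policyimp} repeatedly, telescoping the discounted sum $V_\pi(s) \le \E_{\pi'}\{r_{t+1} + \gamma r_{t+2} + \cdots\} = V_{\pi'}(s)$, precisely so that the reader can contrast this low-level limiting argument with the coinductive one. Your proof is, in essence, the coinductive alternative the paper advocates: you observe that $Q_\pi(s,\pi'(s)) = \bellman_{\pi'}(V_\pi)(s)$ and $V_\pi = \bellman_\pi(V_\pi)$, so the hypothesis is exactly $\bellman_{\pi'} V_\pi \ge \bellman_\pi V_\pi$, and the first bullet of \rref{thm:policy_improvement} (or, unfolded, one application of contraction coinduction with $\phi(W) := V_\pi \le W$, using closedness of $\{f \mid f \ge g\}$, monotonicity and contractivity of $\bellman_{\pi'}$, and the fixed-point characterization of $V_{\pi'}$) finishes it. Both steps of your reduction are sound, and your direct coinduction sketch correctly instantiates the second bullet of \rref{thm:contrc}. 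What the two approaches buy: the textbook telescoping proof is self-contained and makes the underlying analytic content visible (each partial sum satisfies the order property, and closedness transfers it to the limit), whereas your route is shorter, reuses the already-proved convergence of the Bellman iterates packaged inside the fixed-point theorem, and is the version that actually corresponds to the Coq lemma \texttt{policy\_improvement\_1}. In short, you have rediscovered exactly the methodological point this section of the paper is making, rather than the proof it exhibits.
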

\begin{proof}
  Starting with (\ref{eq:policyimp}) we keep expanding the $Q_{\pi}$ side and reapplying (\ref{eq:policyimp}) until we get $V_{\pi'}(s)$. 
  \begin{align*}
    V_{\pi}(s) &\le Q_{\pi}(s,\pi'(s)) \\
    &= \E_{\pi'}\left\{ r_{t+1} + \gamma V_{\pi}(s_{t+1})|s_t = s\right\} \\
    &\le \E_{\pi'}\left\{ r_{t+1} + \gamma Q_{\pi}(s_{t+1},\pi'(s_{t+1}))| s_t = s\right\} \\
    &= \E_{\pi'}\left\{ r_{t+1} + \gamma \E_{\pi'}\left\{ r_{t+2} + \gamma V_{\pi}(s_{t+2}) \right\} | s_t = s\right\}\\
    &=\E_{\pi'}\left\{ r_{t+1} + \gamma r_{t+2} + \gamma^2 V_{\pi}(s_{t+2}) \ |\ s_t = s \right\}\\
    &\le \E_{\pi'}\left\{ r_{t+1} + \gamma r_{t+2} + \gamma^2 r_{t+3} + \gamma^3 V_{\pi}(s_{t+3}) \ |\ s_t = s \right\}\\
    &\vdots \\
    &\le \E_{\pi'}\left\{ r_{t+1} + \gamma r_{t+2} + \gamma^2 r_{t+3} + \gamma^3 r_{t+4} \dots \ |\ s_t = s \right\}\\
    &= V_{\pi'}(s)
  \end{align*}
\end{proof}

At a high level, this proof proceeds by showing that the closed property $\{x | x \le y\}$ holds of each partial sum of the infinite series $V_{\pi}(s)$. By completeness and using the fact that partial sums converge to the full series $V_{\pi}$, this property is also shown to hold of the fixed point $V_{\pi}$. 

However, the coinductive version of this proof (Theorem \ref{thm:policy_improvement}) is simpler because it exploits the fact that this construction has already been done once in the proof of the fixed point theorem: the iterates of the contraction operator were already proven to converge to the fixed point and so there is no reason to repeat the construction again. Thus, the proof is reduced to simply establishing the ``base case'' of the (co)induction. 

The power of this method goes beyond simplifying proofs for Markov decision processes. See \cite{KozenRuozzi} for other examples.

\section{Related and Future Work}
To our knowledge, \libname is the first formal proof of convergence for value iteration or policy iteration.
Related work falls into three categories: 
\begin{enumerate}
  \item libraries that \libname builds upon,  
  \item formalizations of results from probability and machine learning, and
  \item work at the intersection of formal verification and reinforcement learning.
\end{enumerate}

\paragraph{Dependencies}
\libname builds on the Coquelicot \cite{Coquelicot} library for real analysis. 
Our main results are statements about fixed points of contractive maps in complete normed modules.
\libname therefore builds on the formal development of the Lax-Milgram theorem and, in particular, Boldo et al.'s formal proof of the Banach fixed point theorem \cite{BoldoElfic}.
\libname also makes extensive use of some utilities from the Q*cert project \cite{DBLP:conf/sigmod/AuerbachHMSS17a}.
\libname includes a bespoke implementation of some basic results and constructions from probability theory and also an implementation of the Giry monad. Our use of the monad for reasoning about probabilistic processes, as well as the design of our library, is highly motivated by the design of the Polaris library \cite{tassarotti2019separation}. Many of the thorough formalizations of probabilities in Coq -- such as the Polaris \cite{tassarotti2019separation}, Infotheo \cite{affeldt2012itp}, and Alea \cite{audebaud2009proofs} -- also contain these results. Refactoring \libname to build on top of one or more of these formalizations might allow future work on certified reinforcement learning to leverage future improvements to these libraries.

Building on these other foundations, \libname demonstrates how existing work on formalization enables formalization of key results in reinforcement learning theory.

\paragraph{Related Formalizations}
There is a growing body of work on formalization of machine learning theory \cite{DBLP:journals/corr/abs-1911-00385,DBLP:journals/corr/abs-2007-06776,markovInHOL,DBLP:conf/icml/SelsamLD17,DBLP:conf/aaai/Bagnall019,DBLP:journals/jar/BentkampBK19}.

Johannes H\"{o}lzl's Isabelle/HOL development of Markov processes is most related to our own work \cite{markovInHOL,hoelzl2017mdp}.
H\"{o}lzl builds on the probability theory libraries of Isabelle/HOL to develop continuous-time Markov chains.
Many of H\"{o}lzl's basic design choices are similar to ours; for example, he also uses the Giry monad to place a monadic structure on probability spaces and also uses coinductive methods.
\libname focuses instead on formalization of convergence proofs for dynamic programming algorithms that solve Markov decision processes. In the future, we plan to extend our formalization to include convergence proofs for model-free methods, in which a fixed Markov decision process is not known \emph{a priori}.

The CertiGrad formalization by Selsam et al. contains a Lean proof that the gradients sampled by a stochastic computation graph are unbiased estimators of the true mathematical function \cite{DBLP:conf/icml/SelsamLD17}. 
This result, together with our development of a library for proving convergence of reinforcement learning algorithms, provides a path toward a formal proof of correctness for deep reinforcement learning.

\paragraph{Formal Methods for RL}
The likelihood that reinforcement learning algorithms will be deployed in safety-critical settings during the coming decades motivates a growing body of work on formal methods for safe reinforcement learning.
This approach -- variously called 
formally constrained reinforcement learning \cite{HasanbeigKroening}, 
shielding\cite{DBLP:conf/aaai/AlshiekhBEKNT18}, 
or
verifiably safe reinforcement learning \cite{Hunt2020VerifiablySE} 
-- uses temporal or dynamic logics to specify constraints on the behavior of RL algorithms.

Global convergence is a fundamental theoretical property of classical reinforcement learning algorithms, and in practice at least local convergence is an important property for any useful reinforcement learning algorithm.
However, the formal proofs underlying these methods typically establish the correctness of a safety constraint but do not formalize any convergence properties.
In future work, we plan to establish an end-to-end proof that constrained reinforcement learning safely converges by combining our current development with the safe RL approach of Fulton et al. \cite{aaai18} and the VeriPhy pipeline of Bohrer et al. \cite{DBLP:conf/pldi/BohrerTMMP18}.

\section{Conclusions}
Reinforcement learning algorithms are an important class of machine learning algorithms that are now being deployed in safety-critical settings.
Ensuring the correctness of these algorithms is societally important, but proving properties about stochastic processes presents several challenges.
In this paper we show how a combination of metric coinduction and the Giry monad provides a convenient setting for formalizing convergence proofs for reinforcement learning algorithms.

\begin{acks}
  We thank Larry Moss, Sylvie Boldo and Mark Squillante for discussions related to this paper. 
  Some of the work described in this paper was performed while Koundinya Vajjha was an intern at IBM Research. Vajjha was additionally supported by the Alfred P. Sloan Foundation under grant number G-2018-10067.
\end{acks}


\bibliographystyle{alpha}
\bibliography{main} 

\end{document}